\journal{ArXiv}
\def\set@curr@file#1{\def\@curr@file{#1}}
\newcommand{\R}{\mathbb{R}}
\newcommand{\K}{\mathcal{K}}
\renewcommand{\S}{\mathcal{S}}
\newcommand{\refp}[1]{(\ref{#1})}
\newtheorem{theorem}{Theorem}[section]
\newtheorem{definition}{Definition}[section]
\newtheorem{lemma}{Lemma}[section]
\patchcmd{\NAT@test}{\else \NAT@nm}{\else \NAT@nmfmt{\NAT@nm}}{}{}
\DeclareRobustCommand\citepos
   \let\NAT@nmfmt\NAT@posfmt
\let\NAT@ctype\z@\NAT@partrue
\let\NAT@orig@nmfmt\NAT@nmfmt
\def\NAT@posfmt#1{\NAT@orig@nmfmt{#1's}}
\newcolumntype{Y}[1]{>{\hangindent=1em \raggedright \let\newline\\\arraybackslash}p{#1}}
\begin{document}

\begin{frontmatter}

\title{Conditional Hierarchical Bayesian Tucker Decomposition for Genetic Data Analysis}

\author[inst1]{Adam Sandler}
\ead{adamsandler2021@u.northwestern.edu}

\affiliation[inst1]{organization={Engineering Sciences and Applied Mathematics},
            addressline={Northwestern University}, 
            city={Evanston},
            state={IL},
            country={US}}

\author[inst2]{Diego Klabjan}
\ead{d-klabjan@northwestern.edu}

\affiliation[inst2]{organization={Industrial Engineering and Management Sciences},
            addressline={Northwestern University}, 
            city={Evanston},
            state={IL},
            country={US}}
            
\author[inst3]{Yuan Luo}
\ead{yuan.luo@northwestern.edu}

\affiliation[inst3]{organization={Preventive Medicine (Health and Biomedical Informatics)},
            addressline={Northwestern University}, 
            city={Chicago},
            state={IL},
            country={US}}


\begin{abstract}
We analyze large, multi-dimensional, sparse counting data sets, finding unsupervised groups to provide unique insights into genetic data. We create gene and biological pathway groups based on patients' variants to find common risk factors for four common types of cancer (breast, lung, prostate, and colorectal) and autism spectrum disorder. To accomplish this, we extend latent Dirichlet allocation to multiple dimensions and design distinct methods for hierarchical topic modeling. We find that our conditional hierarchical Bayesian Tucker decomposition models are more coherent than baseline models.
\end{abstract}

\begin{keyword}
Bayesian Tucker decomposition \sep hierarchical topic modeling \sep genetic data analysis \sep autism spectrum disorder \sep cancer
\end{keyword}

\end{frontmatter}

\section{Introduction}
\label{Introduction}

Genetic and environmental factors cause autism spectrum disorder (ASD) and cancer. We create unsupervised groupings of relevant genes and their biological pathways to examine patients' genetic variants to find common risk factors. Learning about patients using genetic data often includes more features than observations, which makes direct supervised learning difficult. Existing methods focus on individual gene markers or curated lists, while groups of genes may provide more insights (see Section \ref{Literature Review}).

Latent Dirichlet allocation (LDA) is a Bayesian model commonly used in natural language processing. In this context, it groups documents by their common words. Words frequently used in similar documents are placed in the same group. In the biomedical context, we find genetic variants in similar patients. We extend this model to incorporate multiple dimensions (conditional Bayesian Tucker decomposition) and use gene and pathway information. Pathways provide a priori knowledge of which genes work together to perform specific functions, while the unsupervised gene groups provide distinct insights about the given patients. These models are quicker to compute than exact methods and better represent rare words or genes than non-Bayesian models (a uniform prior is typically used to account for rare instances).

We design distinct methods for incorporating hierarchical topic modeling based on nested Chinese restaurant processes (CRP) and Pachinko Allocation Machine (PAM) into Bayesian tensor decomposition. In the LDA context, top-level groups would contain common genetic mutations, while lower levels apply to more specific subsets.

We first define a non-hierarchical conditional Bayesian Tucker decomposition as multiple features for a given patient. Then, we develop hierarchical topic models for multiple modes. We use these topic models in the conditional Bayesian Tucker decomposition to group patients based on their counts of genetic variants and pathways. Then, we examine and compare the groups generated by different models.

Our contributions are as follows.

\begin{itemize}
\item We define a new formulation for conditional Bayesian Tucker decomposition. 

\item We develop methods for incorporating hierarchical topic models into the Bayesian Tucker decomposition.

\item We study three known properties of CRP in the context of multimodal topic models.

\item We derive a collapsed Gibbs sampler for the conditional Bayesian Tucker decomposition with an arbitrary number of modes.

\item We present a novel method for feature reduction for sparse counting data.
\end{itemize}

In Section \ref{Literature Review} we review existing literature related to our research. In Section \ref{Models} we define our decomposition and topic models. In Section \ref{Properties} we discuss which properties apply to our models. In Section \ref{Algorithms} we discuss algorithms used to compute our decompositions. In Section \ref{Model Evaluation} we evaluate our models trained on the example data sets.

\section{Related Work}
\label{Literature Review}

Two tensor decomposition methods are considered generalizations of singular value decomposition: Tucker decomposition, which decomposes a tensor as a core tensor multiplied by matrices along each mode,
and CANDECOMP/PARAFAC (CP) decomposition, which decomposes a tensor as the sum of rank-one tensors
\citep{kolda2009tensor}. \cite{luo2017tensor,luo2017tensor2} suggested using tensor decompositions to identify latent groups in biomedical fields, including genotyping and phenotyping.

Models exist for decomposing probability tensors using a Dirichlet prior. \cite{dunson_xing_2009, zhou_bhattacharya_herring_dunson_2015} proposed using a Bayesian model to decompose a joint probability tensor using CP decomposition.
Also, \cite{yang_dunson_2016} proposed a Bayesian model to decompose a conditional probability tensor according to the Tucker decomposition.
They used a finite-sized core tensor, which can be adjusted as part of the posterior algorithm. \cite{dunson_xing_2009,yang_dunson_2016,zhou_bhattacharya_herring_dunson_2015} imposed Dirichlet priors (or a Dirichlet stick-breaking prior) on the components of the decomposition and proposed a Gibbs sampler for the posterior computation.
\cite{dunson_xing_2009, zhou_bhattacharya_herring_dunson_2015} used a CP decomposition, while we use a Tucker decomposition. \cite{yang_dunson_2016} used a Tucker decomposition with $1$ response variable conditional on $p$ predictor variables, while we use $p$ feature variables conditional on $1$ sample variable. In addition to this different formulation, we add a hierarchical structure that has not been studied before in the tensor context.

If a conditional Bayesian tensor decomposition consists of only two modes (one sample variable and one feature variable), then both \citepos{yang_dunson_2016} and our model are equivalent. In this case, both decompositions simplify to a matrix factorization method, LDA, which has been studied extensively.
\cite{NIPS2010_3902} and \cite{Buntine2002} stated that LDA can be viewed as a probabilistic matrix factorization of the counting matrix of words in each document into a matrix of topic weights and a dictionary of topics.
Also, \cite{schein_zhou_blei_wallach_2016} noted a connection between Poisson matrix factorization and LDA.

Some researchers studied Bayesian decompositions with other priors. \cite{schein_zhou_blei_wallach_2016} modeled country-to-country interactions using a four-mode tensor to represent an action performed between two countries in a given month. They applied a Bayesian Poisson Tucker decomposition to group countries, actions, and time steps.
\cite{ICML2012Xu_543} proposed a model for computing the Tucker decomposition of a tensor using a normal prior and a variational expectation maximization posterior algorithm. \cite{chi2012tensors} developed a Poisson CP decomposition, called CP Alternating Poisson Regression, and fit their model using a log-likelihood score.

Hierarchical decompositions were studied in a non-Bayesian context. \cite{Hackbusch2009} defined the hierarchical Tucker format, with hierarchy according to vector spaces and subspaces.
\cite{grasedyck_2010} developed algorithms for computing decompositions in the hierarchical Tucker format based on hierarchical singular value decomposition.
\cite{pmlr-v28-song13} defined a recursive decomposition algorithm for estimating a latent tree graphical model of hierarchical tensor decompositions. Their model depicts the joint probability of a set of observed variables as nodes, dependent on their hidden parents.
\cite{Schifanella:2014:MTD:2630935.2532169} proposed a method for the hierarchical decomposition of tensors by adjusting the resolution or size of the core tensor to provide different resolution decompositions of the same data.
Unlike our model, none of these models used Bayesian inference nor expressed the hierarchy of latent topics in each mode. Instead, these models depicted a hierarchy of vector spaces, a hierarchy of hidden variables, or different resolutions.
Also, \cite{teh2006hierarchical} developed a hierarchical Dirichlet process to cluster grouped data.

Another common genetic data analysis technique is to make predictions using a curated list of genes. This method can pick out specific gene markers but cannot find groups of similar genes that contribute to a disease's development. Some studies predicted ASD diagnosis using this method and gene expression rather than genetic sequencing, data \citep{kong2012, Duda2018}. \cite{Abruzzo2015} used six different categories of biomarkers to diagnose ASD in children. They demonstrated that using several biomarkers to diagnose ASD is better than using only one. \cite{bioinformatics/btg382, gsea} do not generate groups of genes, but test if a priori sets are differentially expressed and therefore related to clinical outcome. \cite{Gill2020.11.30.403907} proposed a Bayesian CP decomposition for gene expression data using individuals, genes, tissues, and time as the modes. \cite{LIU2022103958} used a Bayesian Gaussian CP decomposition on gene expression data to identify genomic factors behind breast cancer. \cite{fnins.2023.1212218} used a Bayesian Graph-Laplacian CP decomposition to combine brain magnetic resonance imaging and gene expression data to identify Alzheimer's biomarkers. \cite{journal.pcbi.1012287} created a kernel Bayesian logistic tensor decomposition to find associations between microRNA and diseases.

Our hierarchical Bayesian Tucker decomposition model differs from previous Bayesian tensor decomposition models because we use a unique formulation and impose a hierarchical structure on the latent topics.

\section{Models}
\label{Models}

In what follows, we use bold font to denote vectors, matrices, and tensors and non-bold to denote scalars. If $\boldsymbol u$ is a vector, we denote $u_j$ as its $j^{th}$ component. Table \ref{Variables} summarizes the variables used and their corresponding definitions for ASD and cancer examples.

In Section \ref{Conditional BTD}, we define our conditional non-hierarchical Bayesian Tucker decomposition. In Section \ref{Overview of Hierarchical Topic Models}, we discuss hierarchical topic models for a single mode. In Section \ref{Conditional HBTD}, we define the conditional hierarchical Bayesian Tucker (HBT) decomposition. In Appendix \ref{Generalizations}, we discuss generalizations of the hierarchical topic models to $p\ge 3$ feature modes. More modes can encompass more complex topic hierarchies. We show how to extend our hierarchical models to account for additional modes.

\begin{table}[t]
\caption{Notation}
\label{Variables}
\centering
\small
\begin{tabular}{c|Y{50mm}|Y{50mm}}
 & Definition & Example \\ \hline
$x$ & sample variable id & patient index \\
$\boldsymbol y$ & feature variables & genes and pathways \\
$\boldsymbol z$ & hidden topic & gene and pathway group \\
$\boldsymbol\pi$ & conditional probability tensor $P(\boldsymbol y | x)$ & patients' gene and pathway prevalence \\
$\boldsymbol\phi$ & topic prevalence for each sample & each patient's topic distribution \\
$\boldsymbol\psi$ & prevalence of each feature in each topic & prevalence of genes and pathways in their respective topics \\
$\lambda_x$ & total count for sample $x$ & total genetic variants for patient $x$ \\
$c_j$ & specific value of $\begin{cases} x, & \text{if } j=0 \\ y_j, & \text{if } j>0\end{cases}$ & a specific patient, gene, or pathway \\
$d_j$ & size of tensor in mode $j$ & number of patients, genes, or pathways \\
$p$ & number of feature modes & $2$ modes (genes and pathways) \\
$K_j$ & number of topics for mode $j$ & number of gene or pathway topics \\
$\mathcal{K}$ & set of possible topics & set of gene and pathway topics \\
$\boldsymbol h$ & specific topic set & specific gene and pathway topics \\
$\boldsymbol T_x$ & sample $x$'s path through hierarchical model & path of patient $x$ through PAM \\
$L$ & depth of hierarchical model & 3-level PAM \\
$\ell$ & level in hierarchical model & level of PAM \\
$\tau^{(\ell,j)}$ & number of topics in level $\ell$ for mode $j$ & number of topics in specific level of PAM \\
$\boldsymbol\alpha, \boldsymbol\beta$ & priors for $\boldsymbol\phi, \boldsymbol\psi$ & uniform prior with value 1 \\
$\boldsymbol\gamma$ & prior or parameter for $\boldsymbol T$ & uniform prior over topics in next level \\
\end{tabular}
\end{table}

\subsection{Conditional Bayesian Tucker Decomposition}
\label{Conditional BTD}

First, we define our tensor decomposition for a counting tensor without a hierarchical structure. In the context of applications to ASD and cancer, this model depicts each patient as a mixture of genetic variant and pathway groups, the mixture of genetic variants in each genetic variant group, and the mixture of pathways in pathway groups. These groups can then be analyzed to find patterns and markers of risk factors.

Given a counting tensor $\mathcal{B}=\{b_{c_0\cdots c_p}\}$,\footnote{Convention dictates that if $\mathcal{B}$ is a tensor, $b_{c_0\cdots c_p}$ are its elements, for $c_i\in\{1,\cdots,d_i\}$ and $i\in\{1,\cdots,p\}$.} we first normalize it by dividing by $\lambda_{c_0}=\sum\limits_{c_1=1}^{d_1} \cdots \sum\limits_{c_p=1}^{d_p} b_{c_0\cdots c_p}$ to obtain the $d_0 \times \cdots \times d_p$ conditional probability tensor $\pi_{c_0 \cdots c_p} =\frac{b_{c_0\cdots c_p}}{\lambda_{c_0}} =P(y_{1}=c_1,\cdots,y_{p}=c_p\ |\ x=c_0)$, where $\boldsymbol{y}$ and $x$ are the feature and sample variables. In the genetic applications, $\boldsymbol\pi=P(\text{genetic variants},\ \text{pathways}\ |\ \text{patient})$, $\boldsymbol{y}$ represents the genes and pathways, and $x$ is a patient. We define the conditional Bayesian Tucker decomposition as the Tucker decomposition of the conditional probability tensor,
$\pi_{c_0 \cdots c_p}=\sum\limits_{\boldsymbol h\in \K}\phi_{c_0 \boldsymbol h} \prod\limits_{j=1}^{p}\psi_{h_j c_j}^{(j)}$,
with $\K=\left\{(h_1,\cdots,h_p)\ |\ h_j \in \{1,\cdots,K_j\}\ \forall\ j\in \{1,\cdots,p\}\right\}$, latent classes $\boldsymbol z$, $\phi_{c_0h_1\cdots h_p}=P(z_{1}=h_1,\cdots,z_{p}=h_p\ |\ x=c_0)$, and $\psi_{h_jc_j}^{(j)}=P(y_{j}=c_j\ |\ z_{j}=h_j)$ for all $j$. In our example,
$\boldsymbol\phi =P(\text{gene groups},$ pathway groups $|\ \text{patient})$,
$\boldsymbol\psi^{(1)} =P(\text{genes}$ $|\ \text{gene groups})$, and
$\boldsymbol\psi^{(2)} =P(\text{pathways}$ $|\ \text{pathway groups})$.
In this model, for each $x$, a joint topic distribution over topic vectors $\boldsymbol h \in \K$ is first selected, governed by core tensor $\boldsymbol\phi$. Next, for all modes $j>0$, the selected topic $h_j$ is a mixture of choices $\{1,\cdots,d_j\}$, governed by auxiliary matrix $\boldsymbol\psi^{(j)}$. We note that $\sum\limits_{h\in \K}\phi_{c_0\boldsymbol h}=1$ for all $c_0\in\{1,\cdots,d_0\}$ and $\sum\limits_{c_j=1}^{d_j}\psi_{h_j c_j}^{(j)}=1$ for all $j$ and $h_j$.

For ease of notation, we define $K=\prod\limits_{i=1}^p K_i$ and map $\text{vec}:\K\mapsto\{1,\cdots, K\}$ as a one-to-one mapping from a tuple of topics to a single topic index. Notation $\text{vec}^{-1}$ is the inverse map. Our model does not depend on the choice of such a map.\footnote{An example mapping would be $\text{vec}(\boldsymbol k)=k_1+(k_2-1)K_1+\cdots+(k_p-1)\prod\limits_{i=1}^{p-1}K_i$. This is a generalization of the column-major order map.} The generative process is presented in Algorithm \ref{Generative Process}, where $\S_d=\left\{\boldsymbol v \in \R^d_+\ \Big|\ \sum\limits_{i=1}^dv_i=1\right\}$ is the $d$-dimensional probability simplex. Our use of Dirichlet and multinomial distributions is consistent with the model and data.

\begin{algorithm}[t]
\small
\begin{multicols}{2}
\For{$x=1,\cdots,d_0$}{
Draw core tensor $\tilde{\boldsymbol\phi}_x\sim\text{Dir}(\boldsymbol\alpha)\in \S_K$ \\
\For{$\boldsymbol k\in\K$}{
$\phi_{x\boldsymbol k}=\tilde{\phi}_{x\text{vec}(\boldsymbol k)}$ \\
}
}
\For{$j=1,\cdots,p$}{
\For{$k=1,\cdots,K_j$}{
Draw auxiliary matrices $\boldsymbol\psi_{k}^{(j)}\sim\text{Dir}\left(\boldsymbol\beta^{(j)}\right)\in \S_{d_j}$ \\
}
}
\For{$x=1,\cdots,d_0$}{
\For{$i=1,\cdots,\lambda_x$}{
Draw latent topics $\varepsilon\sim\text{Mult}\left(\{1,\cdots,K\},\tilde{\boldsymbol\phi}_x\right)$ \\
$\boldsymbol z_{i}^{(x)}=\text{vec}^{-1}(\varepsilon)$ \\
\For{$j=1,\cdots,p$}{
Draw features $y_{ij}^{(x)}\sim\text{Mult}\left(\{1,\cdots,d_j\},\psi_{z_{ij}^{(x)}}^{(j)}\right)$ \\
}
}
}
\end{multicols}
\caption{Generative Process}
\label{Generative Process}
\end{algorithm}

The model probability is the product of components: factor matrices given their priors, core tensor given its prior, and individual count probabilities. The probability of the factor matrices $\boldsymbol\psi$ given its priors $\boldsymbol\beta$ is a nested product over modes $j$ and topics $h_j$ in that mode given the prior for that mode:
\begin{equation} P(\boldsymbol\psi|\boldsymbol\beta) = \prod\limits_{j=1}^p \prod\limits_{h_j=1}^{K_j}P\left(\boldsymbol\psi_{h_j}^{(j)}\big|\boldsymbol\beta^{(j)}\right).\label{feature matrices}\end{equation}
The probability of the core tensor $\boldsymbol\phi$ given its prior $\boldsymbol\alpha$ is a product over all samples $x$ given the prior:
\begin{equation} P(\boldsymbol\phi|\boldsymbol\alpha) = \prod\limits_{x=1}^{d_0} P\left(\boldsymbol\phi_{x}|\boldsymbol\alpha\right).\label{core tensor}\end{equation}
The individual count probabilities are a nested product over samples $x$ and counts within each sample $\lambda_x$ of the hidden topics $\boldsymbol z_{i}^{(x)}$'s probability given the core tensor $\boldsymbol\psi_x$ and the feature variable $\boldsymbol y_{i}^{(x)}$'s probability given the feature matrices $\boldsymbol\psi_{\boldsymbol z_{i}^{(x)}}$:
\begin{equation} P(\boldsymbol Y,\boldsymbol Z |\boldsymbol\phi, \boldsymbol\psi) = \prod\limits_{x=1}^{d_0} \prod\limits_{i=1}^{\lambda_{x}} P\left(\boldsymbol y_{i}^{(x)}\big|\boldsymbol\psi_{\boldsymbol z_{i}^{(x)}}\right)P\left(\boldsymbol z_{i}^{(x)}\big|\boldsymbol\phi_{x}\right).\label{individual counts}\end{equation}
The overall model probability is a product of the components in \refp{feature matrices}-\refp{individual counts}:
\begin{equation}
P(\boldsymbol Y, \boldsymbol Z, \boldsymbol\phi, \boldsymbol\psi|\boldsymbol\alpha, \boldsymbol\beta)= P(\boldsymbol\psi|\boldsymbol\beta)P(\boldsymbol\phi|\boldsymbol\alpha) P(\boldsymbol Y,\boldsymbol Z |\boldsymbol\phi, \boldsymbol\psi).\label{non-hierarchical prob}
\end{equation}

\subsection{Overview of CRP \& PAM}
\label{Overview of Hierarchical Topic Models}

As a prelude to our discussion on hierarchical topic models in multiple modes, we first discuss relevant hierarchical topic models for a single feature mode. We start with the nested Chinese Restaurant Process (nCRP) and hierarchical LDA (hLDA). Then, we define the Pachinko Allocation Machine (PAM) and hierarchical PAM (hPAM).

\cite{NIPS2003_2466} defined nCRP as follows. First, each patient is assigned a root topic. Then, each patient is assigned a topic on the next level of the tree using a CRP. This CRP is only used by patients from the same parent topic, and its topics are distinct from other parent topics. This process repeats until each patient has drawn a path of $L$ topics, creating a tree of depth $L$. The transition from LDA to hLDA is made by drawing a path $\boldsymbol T_x$ through the nCRP for each patient $x$, then using each patient's path as their topics set (only these topics have non-zero prevalence).

\cite{Li:2006:PAD:1143844.1143917} defined PAM as a model that connects topics with a directed acyclic graph (DAG). PAM samples a topic path through the DAG, starting at the root and sampling each child according to the multinomial distribution of the current topic. Although a PAM can use an arbitrary DAG, we use a DAG with a leveled structure, i.e., for a set number of levels $L$, each node on level $\ell \in \{1,\cdots, L-1\}$ is a parent of every node on the next level $\ell+1$. In this model, the number of topics $\tau^{(\ell)}$ are predetermined for each level $\ell$. \cite{Mimno2007} defined hPAM as a PAM model where all nodes are associated with distributions over the vocabulary rather than only those on the lowest level. In this model, a path is drawn through the PAM for each patient, where the set of nodes visited determines the patient's topics set.

\subsection{Conditional Hierarchical Bayesian Tucker Decomposition}
\label{Conditional HBTD}

Next, we extend our conditional Bayesian Tucker decomposition to topic hierarchies. For simplicity, we assume $p=2$ feature modes, but we discuss generalizations to $p\ge3$ in Appendix \ref{Generalizations}.

With genes and pathways, some variants are more common than others. Thus, it is helpful to organize topics hierarchically where more common variants determine higher-level groups, while rarer variants determine specific subgroups.

As outlined in Section \ref{Overview of Hierarchical Topic Models}, the transition from LDA to hLDA uses an nCRP. The challenge with implementing this in our context is that we have multiple modes of topics. It is unclear how to generalize nCRP, with each table representing a topic pair rather than a single topic. In the case of hLDA, a customer sitting at a new table represents drawing a new topic. But if each table represents a topic pair rather than a single topic, what does a new table represent? A new topic in one or both modes or a new combination of existing topics? There is no clear way of determining what a new table represents without imposing an order on pairs of groups. We prove in Section \ref{Conditions} that there is no natural order where the order of customers does not matter.

We describe two solutions to this problem: the independent topic model and the hierarchical topic model. The hierarchical model requires an order of modes since the choice of topic for each mode depends on the topic in its parent mode. This method allows us to incorporate gene and pathway topics into one hierarchical model. Given $x\in\{1,\cdots,d_0\}$, let $\boldsymbol T_x$ be its path through a conceptual topic model. The topic distribution $\boldsymbol\phi_x$ along any such $\boldsymbol T_x$ is drawn from $\text{Dir}(\boldsymbol\alpha)\in\S^L$, with $L$ being the length of $\boldsymbol T_x$. In other words, $\boldsymbol T_x$ dictates which topics have positive probabilities.

\begin{figure}[t]
\centering
\begin{tikzpicture}[scale=.9, transform shape]
\tikzstyle{main}=[circle, minimum size = 10mm, thick, draw =black!80, node distance = 12mm]
\tikzstyle{connect}=[-latex, thick]
\tikzstyle{box}=[rectangle, draw=black!100]
  \node[main, fill = white!100] (alpha) {$\alpha$};
  \node[main] (phi) [right=of alpha] {$\phi_{c_0}$};
  \node[main] (t) [below=6mm of phi] {$T_{c_0}$};
  \node[main] (gamma) [left=of t] {$\gamma$};
  \node[main] (z) [right=of phi] {Z};
  \node[main, fill = black!10] (y) [right=of z] {Y};
  \node[main] (psi) [right=of y] {$\psi^{(j)}$};
  \node[main] (beta) [right=of psi] {$\beta$};
  \path (alpha) edge [connect] (phi)
        (phi) edge [connect] (z)
		(z) edge [connect] (y)
		(beta) edge [connect] (psi)
        (psi) edge [connect] (y)
        (t) edge [connect] (phi)
        (gamma) edge [connect] (t);
  \node[rectangle, inner sep=2.8mm,draw=black!100, fit= (psi)] (box1){};
  \node[above left=-.8mm and -.8mm] at (box1.south east) {$K_j$};
  \node[rectangle, inner sep=4mm,draw=black!100, fit= {(psi) ($(psi.east)+(3mm,0)$)}] (box2) {};
  \node[above left=-.8mm and -.8mm] at (box2.south east) {$p$};
  \node[rectangle, inner sep=3mm,draw=black!100, fit= (z) (y)] (box3) {};
  \node[above left=-.8mm and -.8mm] at (box3.south east) {$\lambda_{c_0}$};
  \node[rectangle, inner sep=4mm, draw=black!100, fit = {(phi) (z) (y) ($(t.south)+(0,1.5mm)$)}] (box4) {};
  \node[above left=-.8mm and -.8mm] at (box4.south east) {$d_0$};
\end{tikzpicture}
\caption{Plate Diagram for a Conditional Hierarchical Bayesian Tucker Decomposition}
\label{Plate Diagram}
\end{figure}

The probability for an HBT decomposition (illustrated in its plate diagram, Figure \ref{Plate Diagram}) is constructed similarly to that of the non-hierarchical model \refp{non-hierarchical prob}. While the factor matrices \refp{feature matrices} and individual count probabilities \refp{individual counts} are the same, the core tensor probability \refp{core tensor} must be modified to incorporate the hierarchical model. Here, we incorporate the probability of each sample's $x$ path $\boldsymbol T_x$ given the parameter $\boldsymbol\gamma$. The core tensor $\boldsymbol\phi_{x}$'s probability for each sample depends on both prior $\boldsymbol\alpha$ and path $\boldsymbol T_x$ since the path dictates which topics have non-zero probabilities:
\begin{equation}
P(\boldsymbol\phi,\boldsymbol T|\boldsymbol\alpha, \boldsymbol\gamma)=\prod\limits_{x=1}^{d_0}P\left(\boldsymbol\phi_{x}|\boldsymbol\alpha, \boldsymbol T_{x}\right) P\left(\boldsymbol T_{x}|\boldsymbol\gamma\right).
\label{hierarchical core}
\end{equation}
Combining \refp{feature matrices}, \refp{individual counts}, and \refp{hierarchical core} yields the overall model probability:
\begin{equation}
P(\boldsymbol Y, \boldsymbol Z, \boldsymbol\phi, \boldsymbol\psi, \boldsymbol T|\boldsymbol\alpha, \boldsymbol\beta,\boldsymbol\gamma)=P(\boldsymbol\psi|\boldsymbol\beta)P(\boldsymbol\phi,\boldsymbol T|\boldsymbol\alpha, \boldsymbol\gamma) P(\boldsymbol Y,\boldsymbol Z |\boldsymbol\phi, \boldsymbol\psi).
\label{hierarchical prob}
\end{equation}

\begin{figure}[t]
\centering
\subfigure[Independent Trees]{
\centering
\label{Independent Trees}
\hspace{1mm}
\begin{tikzpicture}[scale=.8, transform shape]
\tikzstyle{main}=[circle, minimum size =3mm, thick, draw =black!80, node distance = 4mm]
\tikzstyle{connect}=[-latex, ultra thin]
\tikzstyle{connectB}=[-latex, ultra thick]
\tikzstyle{rect}=[rectangle, minimum size = 3mm, thick, draw =black!80, node distance = 4mm, minimum width = 12mm]
  \node[main, fill = black!20] (1) {};
  \node[main, fill = black!20] (22) [below right=4mm and 4mm of 1] {};
  \node[main] (21) [left=14mm of 22] {};
  \node[main] (23) [right=9mm of 22] {};
  \node[main] (32) [below=of 21] {};
  \node[main] (31) [left=of 32] {};
  \node[main] (33) [right=of 32] {};
  \node[main, fill = black!20] (34) [right=of 33] {};
  \node[main] (35) [right=of 34] {};
  \node[main] (36) [right=of 35] {};
  \path (1) edge [connect] (21)
 (1) edge [connectB] (22)
 (1) edge [connect] (23)
 (21) edge [connect] (31)
 (21) edge [connect] (32)
 (21) edge [connect] (33)
 (22) edge [connectB] (34)
 (22) edge [connect] (35)
 (23) edge [connect] (36);
 \node[above=2 mm of 1] {\small Mode 1};
 \node[right=20 mm of 1] (L1) {\small Lvl 1};
 \node[below=2 mm of L1] (L2) {\small Lvl 2};
 \node[below=2 mm of L2] (L3) {\small Lvl 3};
 \node[below=2 mm of L3] (L4) {\small Lvl 4};
 \node[main, fill = black!20] (201) [right=12mm of L1] {};
 \node[main] (222) [below right=4mm and 4.5mm of 201] {};
 \node[main, fill = black!20] (221) [left=11mm of 222] {};
 \node[main, fill = black!20] (232) [below right=4mm and 1.5mm of 221] {};
 \node[main] (231) [left=of 232] {};
 \node[main] (233) [right=of 232] {};
 \node[main] (234) [right=of 233] {};
 \node[main] (241) [below=of 231] {};
 \node[main, fill = black!20] (242) [below=of 232] {};
 \node[main] (243) [below=of 233] {};
 \node[main] (244) [below=of 234] {};
 \path (201) edge [connectB] (221)
 (201) edge [connect] (222)
 (221) edge [connect] (231)
 (221) edge [connectB] (232)
 (222) edge [connect] (233)
 (222) edge [connect] (234)
 (231) edge [connect] (241)
 (232) edge [connectB] (242)
 (233) edge [connect] (243)
 (234) edge [connect] (244);
 \node[above=2 mm of 201] {\small Mode 2};
\end{tikzpicture}
\hspace{1mm}}%
\subfigure[PAM-Based]{
\centering
\label{PAM}
\hspace{1mm}
\begin{tikzpicture}[scale=.8, transform shape]
\tikzstyle{main}=[circle, minimum size =3mm, thick, draw =black!80, node distance = 4mm]
\tikzstyle{connect}=[-latex, ultra thin]
\tikzstyle{connectB}=[-latex, ultra thick]
\tikzstyle{rect}=[rectangle, minimum size = 3mm, thick, draw =black!80, node distance = 4mm, minimum width = 12mm]
  \node[main, fill = black!20] (1) {};
  \node[main, fill = black!20] (22) [below= of 1] {};
  \node[main] (21) [left=of 22] {};
  \node[main] (23) [right=of 22] {};
  \node[main] (33) [below=of 22] {};
  \node[main, fill = black!20] (32) [left=of 33] {};
  \node[main] (31) [left=of 32] {};
  \node[main] (34) [right=of 33] {};
  \node[main] (35) [right=of 34] {};
  \node[main] (41) [below=of 31] {};
  \node[main] (42) [below=of 32] {};
  \node[main] (43) [below=of 33] {};
  \node[main, fill = black!20] (44) [below=of 34] {};
  \node[main] (45) [below=of 35] {};
  \path (1) edge [connect] (21)
 (1) edge [connectB] (22)
 (1) edge [connect] (23)
 (21) edge [connect] (31)
 (21) edge [connect] (32)
 (21) edge [connect] (33)
 (21) edge [connect] (34)
 (21) edge [connect] (35)
 (22) edge [connect] (31)
 (22) edge [connectB] (32)
 (22) edge [connect] (33)
 (22) edge [connect] (34)
 (22) edge [connect] (35)
 (23) edge [connect] (31)
 (23) edge [connect] (32)
 (23) edge [connect] (33)
 (23) edge [connect] (34)
 (23) edge [connect] (35)
 (31) edge [connect] (41)
 (31) edge [connect] (42)
 (31) edge [connect] (43)
 (31) edge [connect] (44)
 (31) edge [connect] (45)
 (32) edge [connect] (41)
 (32) edge [connect] (42)
 (32) edge [connect] (43)
 (32) edge [connectB] (44)
 (32) edge [connect] (45)
 (33) edge [connect] (41)
 (33) edge [connect] (42)
 (33) edge [connect] (43)
 (33) edge [connect] (44)
 (33) edge [connect] (45)
 (34) edge [connect] (41)
 (34) edge [connect] (42)
 (34) edge [connect] (43)
 (34) edge [connect] (44)
 (34) edge [connect] (45)
 (35) edge [connect] (41)
 (35) edge [connect] (42)
 (35) edge [connect] (43)
 (35) edge [connect] (44)
 (35) edge [connect] (45);
 \node[right=20 mm of 1] (M1L1) {\small Mode 1, Lvl 1};
 \node[below=2 mm of M1L1] (M2L1) {\small Mode 2, Lvl 1};
 \node[below=2 mm of M2L1] (M1L2) {\small Mode 1, Lvl 2};
 \node[below=2 mm of M1L2] (M2L2) {\small Mode 2, Lvl 2};
\end{tikzpicture}
\hspace{1mm}}
\caption{Example Hierarchical Topic Models ($p=2$)}
\end{figure}

We define an independent topic model ($p=2$) as containing two separate topic models for each mode (example shown in Figure \ref{Independent Trees}). Each customer $x$ draws two paths, one for each mode, $\boldsymbol T_x^{(1)}$ and $\boldsymbol T_x^{(2)}$, represented as a set of topics in that mode. The overall topic list $\boldsymbol T_x$ consists of all possible pairs in $\boldsymbol T_x^{(1)}$ and $\boldsymbol T_x^{(2)}$. A common choice draws these paths $\boldsymbol T_x^{(1)} \text{ and } \boldsymbol T_x^{(2)}$ using two independent nCRPs as follows: Let $\boldsymbol P^{(x, \ell, m)}$ be the probabilities associated with two independent CRPs, with hyper-parameters $\gamma_m^{(\ell)}$, where $m\in\{1,2\}$ is the mode and $\ell\in\{1,\cdots, L_m\}$ is the level of the tree. In this model, the number of tables (topics) in each mode $\tau^{(\ell, m)}$ varies. Specifically, 
$$P_i^{(x, \ell, m)} =\begin{cases} \frac{n_i}{\gamma_m^{(\ell)}+n-1}, & \text{table } i \text{ occupied} \\ \frac{\gamma_m^{(\ell)}}{\gamma_m^{(\ell)}+n-1}, & \text{next unoccupied table } i \end{cases},$$
where customer $x$ is the $n^{th}$ customer at the restaurant and $n_i$ customers are already seated at table $i$. While others have imposed independent hierarchical structures on the topics of different modes \citep{pmlr-v28-song13, Schifanella:2014:MTD:2630935.2532169}, none have incorporated nCRP into their topic models.

We define a hierarchical topic model ($p=2$) as a single topic model with a dominant mode (example depicted in Figure \ref{PAM}). Without loss of generality, we assume the first mode is dominant. At the first level, each customer $x$ starts at the root topic $1$ in mode $1$; then, each customer chooses a topic $j$ in mode $2$, according to probability $\boldsymbol P_1^{(1,1)}$. At each subsequent level $\ell\in\{2,\cdots,L\}$, each customer chooses a topic $i$ in mode $1$, according to $\boldsymbol P_j^{(\ell-1,2)}$, then a topic $j$ in mode $2$, according to $\boldsymbol P_i^{(\ell,1)}$. Here, $\boldsymbol P_i^{(\ell,m)}$ is the probability distribution over its children topics. There are two obvious choices for the overall topic list $\boldsymbol T_x$, the pairs of topics visited at each level of the DAG (which we call the level method) or all possible pairs of elements in the topic lists for each mode (we call this the Cartesian method). If PAM ideas are used, $\boldsymbol P_i^{(\ell, m)}$ (for all $i\in\{1,\cdots,\tau^{(\ell,1)}\}$ and $m \in\{1,2\}\}$) are multinomials drawn from Dirichlet distributions, i.e., $\boldsymbol P_i^{(\ell,m)}\sim \text{Dir}\left(\boldsymbol\gamma_i^{(\ell,m)}\right)\in \S_{\tau^{(\ell,m)}}$. While PAM has been used to model interactions between variables for LDA \citep{Li:2006:PAD:1143844.1143917}, it has not been used to model topic interactions between multiple modes.

\section{Properties}
\label{Properties}

First, in Section \ref{Definitions} we define the exchangeability, partition, and rich-get-richer properties based on those of the CRP, which govern topic models. \cite{Blei2007} defined the exchangeability property as the model probability being invariant under the permutations of customers. The partition property implies that two arrangements of the same number of topics with equal occupants have the same probability. In the single-mode case, this property follows from exchangeability, but this is not necessarily true for multiple modes. \cite{Teh2010a} described the rich-get-richer property as large clusters growing faster than smaller clusters. For simplicity, we assume $p=2$ feature modes and study a single level. In Section \ref{Conditions}, we prove that these properties cannot strictly apply in the $p=2$ case.

\subsection{Definitions}
\label{Definitions}

To define these properties, we first need to define a partition, an arrangement of customers into topics. Here, we assume the number of customers $n$ is fixed.

\begin{definition} The count of customers assigned to each topic is a \textbf{partition}.
For partition $\boldsymbol\rho =\{\rho_{ij}\}$, $\rho_{ij}$ is the count of customers assigned topic $i$ in mode 1 and topic $j$ in mode 2.
\end{definition}

\noindent In other words, $\rho_{ij}$ is the number of customers sitting at the same table. Here, $\boldsymbol\rho$ is a matrix, with rows and columns representing the topics in each mode. Note that $n=\sum\limits_{i,j}\rho_{ij}$.

We first define the exchangeability property. This property is necessary because if the order of customers affects the model, it would be impossible to determine how well it would perform inference, i.e., predict new customers. 

\begin{definition} We say that a topic model has the \textbf{exchangeability} property if the probability of a partition does not depend on the order of the customers, i.e., $P(\boldsymbol\rho|\text{order of customers}) = P(\boldsymbol\rho|\Pi(\text{order of customers}))$, where $\Pi$ is a permutation of the order.
\end{definition}

Next, we define two variants of the partition property: strict and loose. This property implies that the label and order of topics do not matter.

\begin{definition} We say that a topic model has the \textbf{strict partition} property if for all partitions $\boldsymbol\rho$ and $\Pi(\boldsymbol\rho)$ permutations of the elements of $\boldsymbol\rho$, $P(\boldsymbol\rho)=P(\Pi(\boldsymbol\rho))$.
\end{definition}

\begin{definition} We say that a topic model has the \textbf{loose partition} property if for all partitions $\boldsymbol\rho$ and $\Pi(\boldsymbol\rho)$ permutations of the rows and columns of $\boldsymbol\rho$, $P(\boldsymbol\rho)=P(\Pi(\boldsymbol\rho))$.
\end{definition}

\noindent Note that if strict partition holds, so does loose partition. However, the converse does not necessarily hold.

Lastly, we define the rich-get-richer property under the assumption of exchangeability and partition properties. The general idea is that new customers tend to join topics with more customers. Let $\rho_{(\cdot)i}=\sum\limits_{j=1}^{K_1}\rho_{ji}$ and $\rho_{i(\cdot)}=\sum\limits_{j=1}^{K_2}\rho_{ij}$. In addition, let $\xi_i$ be the probability of a new customer being assigned to topic $i$ in mode 1 and $\theta_{ij}$ be the probability of a new customer being assigned to topic $j$ in mode 2 given that the customer was assigned to topic $i$ in mode 1. Due to the exchangeability property, $\xi_i$ and $\theta_{ij}$ do not depend on the customers' order. Also, for the independent topic model, we can drop the dependency in $\theta$ on the topic in the first mode and denote $\theta_{i}$ as the probability of a new customer being assigned to topic $i$ in mode 2.

\begin{definition} We say that an independent topic model has the \textbf{rich-get-richer} property if
\begin{enumerate}
\item assuming $\rho_{i(\cdot)}\neq0$ and $\rho_{j(\cdot)}\neq0$, then
$\xi_i > \xi_j$ if and only if $\rho_{i(\cdot)}>\rho_{j(\cdot)}$ (Mode 1), and
\item assuming $\rho_{(\cdot) i}\neq0$ and $\rho_{(\cdot) j}\neq0$, then $\theta_{i}>\theta_{j}$ if and only if $\rho_{(\cdot)i}>\rho_{(\cdot)j}$ (Mode 2).
\end{enumerate}
\label{iRGR}
\end{definition}

\begin{definition} We say that a hierarchical topic model has the \textbf{rich-get-richer} property if 
\begin{enumerate}
\item assuming $\rho_{i(\cdot)}\neq 0$ and $\rho_{j(\cdot)}\neq0$,
then $\xi_i > \xi_j$ if and only if $\rho_{i(\cdot)}>\rho_{j(\cdot)}$ (Mode 1), and
\item assuming $\rho_{ki}\neq 0$ and $\rho_{kj}\neq0$, then $\theta_{ki}>\theta_{kj}$ if and only if $\rho_{ki}>\rho_{kj}$, $\forall\ k$ (Mode 2).
\end{enumerate}
\label{hRGR}
\end{definition}

\noindent The difference between definitions  \ref{iRGR} and \ref{hRGR} is that the independent model only compares the fibers (rows and columns) of the partition. In contrast, the hierarchical model compares individual elements within each fiber.

The rich-get-richer property is important because it guides the model to group similar genes and pathways rather than creating a new topic. A model without this property would place each gene in a separate topic, negating the model’s utility.

\subsection{Conditions}
\label{Conditions}

Now, we determine under which conditions the loose and strong partition properties hold in multi-modal hierarchical models. While we sketch our proofs here, detailed proofs are in Appendix \ref{Proofs}. We begin with the PAM-based model ($p=2$), giving the specific case in which the loose partition property holds.

\begin{theorem}
\label{Loose Partition}
The loose partition property holds in the PAM model ($p=2$) if and only if the parameters of the Dirichlet distributions are symmetric.
\end{theorem}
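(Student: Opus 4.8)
The plan is to marginalize out the random selection probabilities and reduce the loose partition property to a symmetry statement about a product of Gamma functions. In the single-level PAM model under study, each of the $n$ customers first draws a mode-1 topic $i$ from a distribution $\mathrm{Dir}(\boldsymbol\gamma^{(1)})$ and then, conditioned on $i$, draws a mode-2 topic $j$ from a distribution $\mathrm{Dir}(\boldsymbol\gamma_i^{(2)})$. First I would integrate out these Dirichlet vectors, so that the mode-1 marginal counts $\rho_{i(\cdot)}=\sum_j\rho_{ij}$ follow a Dirichlet--multinomial (P\'olya) law with parameter $\boldsymbol\gamma^{(1)}$ and, inside each mode-1 topic $i$, the row $(\rho_{ij})_j$ follows a Dirichlet--multinomial with parameter $\boldsymbol\gamma_i^{(2)}$. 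Writing $B(\boldsymbol v)=\prod_k\Gamma(v_k)/\Gamma(\sum_k v_k)$, this gives
\[
P(\boldsymbol\rho)=\frac{n!}{\prod_{i,j}\rho_{ij}!}\cdot\frac{B\!\big(\boldsymbol\gamma^{(1)}+\boldsymbol\rho_{\cdot(\cdot)}\big)}{B\!\big(\boldsymbol\gamma^{(1)}\big)}\cdot\prod_i\frac{B\!\big(\boldsymbol\gamma^{(2)}_i+\boldsymbol\rho_{i\cdot}\big)}{B\!\big(\boldsymbol\gamma^{(2)}_i\big)},
\]
where $\boldsymbol\rho_{\cdot(\cdot)}=(\rho_{i(\cdot)})_i$ and $\boldsymbol\rho_{i\cdot}=(\rho_{ij})_j$. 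This factorized form is the backbone of the argument, and throughout I would use that the multinomial coefficient and the normalizers $B(\boldsymbol\gamma^{(1)})$, $B(\boldsymbol\gamma^{(2)}_i)$ are invariant under any relabeling of $\boldsymbol\rho$.

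For the sufficiency direction I would take \emph{symmetric} to mean $\boldsymbol\gamma^{(1)}=a\mathbf 1$ and $\boldsymbol\gamma^{(2)}_i=b\mathbf 1$ for every $i$. Then $B(\boldsymbol\gamma^{(1)}+\boldsymbol\rho_{\cdot(\cdot)})$ is a symmetric function of the marginals $\rho_{i(\cdot)}$ and $\prod_i B(\boldsymbol\gamma^{(2)}_i+\boldsymbol\rho_{i\cdot})=\prod_i B(b\mathbf 1+\boldsymbol\rho_{i\cdot})$ is a symmetric function of the full array $\{\rho_{ij}\}$. Since a row permutation merely permutes the marginals $\rho_{i(\cdot)}$ and reindexes the outer product, while a column permutation leaves the marginals fixed and reindexes each inner product, every factor of $P(\boldsymbol\rho)$ is unchanged; hence $P(\boldsymbol\rho)=P(\Pi\boldsymbol\rho)$ and the loose partition property holds.

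For necessity I would assume $P(\boldsymbol\rho)=P(\Pi\boldsymbol\rho)$ for all $\boldsymbol\rho$ and recover symmetry by evaluating on well-chosen partitions. First, fixing a mode-1 topic $i_0$ and placing all mass in row $i_0$, invariance under a column transposition $(j\,j')$ forces $\Gamma(\gamma^{(2)}_{i_0 j}+\rho_{i_0 j})\Gamma(\gamma^{(2)}_{i_0 j'}+\rho_{i_0 j'})$ to be symmetric in the two counts; setting $\rho_{i_0 j}=r$ and $\rho_{i_0 j'}=0$ yields the rising-factorial identity $(\gamma^{(2)}_{i_0 j})_r=(\gamma^{(2)}_{i_0 j'})_r$ for all $r\ge 0$, hence $\gamma^{(2)}_{i_0 j}=\gamma^{(2)}_{i_0 j'}$ at $r=1$. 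Doing this for every pair and every $i_0$ shows each mode-2 Dirichlet is symmetric, say $\boldsymbol\gamma^{(2)}_i=b_i\mathbf 1$, with $t$ the number of mode-2 topics. Second, I would apply a row transposition $(i_1\,i_2)$ to a partition supported only on those two rows. After all common and constant factors cancel, $P(\boldsymbol\rho)=P(\Pi\boldsymbol\rho)$ collapses, with $(x)_r=\Gamma(x+r)/\Gamma(x)$, to
\[
\frac{(\gamma^{(1)}_{i_1})_r}{(\gamma^{(1)}_{i_2})_r}\cdot\frac{(b_{i_1})_r\,(t\,b_{i_2})_r}{(t\,b_{i_1})_r\,(b_{i_2})_r}=1\qquad\text{for all } r\ge 0.
\]
Evaluating at $r=1$ collapses the second factor and gives $\gamma^{(1)}_{i_1}=\gamma^{(1)}_{i_2}$, so $\boldsymbol\gamma^{(1)}$ is symmetric; substituting this and evaluating at $r=2$ gives $(t-1)(b_{i_1}-b_{i_2})=0$, so $b_{i_1}=b_{i_2}$ whenever $t\ge 2$. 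Ranging over all transpositions makes the $b_i$ a single common constant, which together with the symmetry of $\boldsymbol\gamma^{(1)}$ is exactly the symmetric condition.

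I expect the row-permutation half of the necessity direction to be the main obstacle: the mode-1 parameter $\gamma^{(1)}_i$ and the mode-2 constant $b_i$ enter $P(\boldsymbol\rho)$ through a single entangled product, so neither can be isolated directly. The resolution is the deliberate choice of two-row test partitions, after which the normalizers $B(\boldsymbol\gamma^{(2)}_i)$ and all the $\Gamma(b_i)$ and $\Gamma(t\,b_i)$ terms cancel exactly, leaving the pure ratio of rising factorials above whose low-order values decouple the two parameters. A minor separate point is the degenerate case $t=1$ (or a single mode-1 topic), where the relevant permutation group is trivial and that mode imposes no constraint, so the equivalence should be stated for at least two topics per mode.
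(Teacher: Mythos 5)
Your proof is correct, and it follows the same basic strategy as the paper's --- integrate out the Dirichlet vectors and compare the resulting P\'olya/Dirichlet-multinomial probabilities of a count configuration against those of its permutation, extracting necessity by evaluating at small counts --- but it executes that strategy at the level of the whole two-level model rather than node by node, and this is a genuine difference. The paper only proves that a \emph{single} Dirichlet node is permutation-invariant iff its parameters are equal, and then asserts in one closing sentence that node-level invariance is equivalent to the loose partition property for the full model. Your derivation of the exact joint law $P(\boldsymbol\rho)$ and your two-row test partitions are precisely what is needed to justify (and, in fact, to correct) that sentence: a row permutation entangles the mode-1 factor $\Gamma\bigl(\gamma^{(1)}_i+\rho_{i(\cdot)}\bigr)$ with the mode-2 normalizers $\Gamma\bigl(t\,b_i+\rho_{i(\cdot)}\bigr)$, and your identity
\begin{equation*}
\frac{(\gamma^{(1)}_{i_1})_r}{(\gamma^{(1)}_{i_2})_r}\cdot\frac{(b_{i_1})_r\,(t\,b_{i_2})_r}{(t\,b_{i_1})_r\,(b_{i_2})_r}=1
\end{equation*}
evaluated at $r=1$ and $r=2$ is what decouples the two levels. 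This buys a sharper conclusion than the paper obtains: having each Dirichlet individually symmetric ($\boldsymbol\gamma^{(2)}_i=b_i\mathbf 1$) is \emph{not} sufficient for the loose partition property --- the constants $b_i$ must also agree across parent topics, since otherwise the second factor above differs from $1$ at $r=2$ whenever $t\ge2$. So your proof both fills the gap in the paper's final step and pins down the correct reading of ``symmetric'' under which the theorem is true; your remark about the degenerate case $t=1$ is likewise a real, if minor, point the paper does not address.
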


In the proof, we consider a single Dirichlet distribution and show that a uniform prior is sufficient for the loose partition property to hold. Then, we use a non-uniform example to show that uniformity is necessary for this property to hold. Lastly, we argue that the theorem holds for the entire PAM model if and only if it holds for every node in the model.

Next, we prove that a hierarchical topic model ($p=2$) for which all three properties apply does not exist. First, we show that the probability of a new customer being assigned a topic in each mode has a functional form.

\begin{lemma}
If there exists a hierarchical topic model ($p=2$) where the rich-get-richer strong partition, and exchangeability properties hold, then $\boldsymbol\xi$ and $\boldsymbol\theta$ are of the form:
$\xi_i\propto\begin{cases} f\left(\rho_{i(\cdot)}\right), & \rho_{i(\cdot)}>0 \\ \gamma_{0}(K_1), & \rho_{i(\cdot)}=0\end{cases}$ and $\theta_{ij}\propto\begin{cases} g_i(\rho_{ij}), & \rho_{ij}>0 \\ \gamma_{i}(K_2), & \rho_{ij}=0\end{cases}$,
for some functions $f$ and $g_i$ $(\forall\ i)$.
\label{function form lemma}
\end{lemma}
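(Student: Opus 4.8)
The plan is to prove the two assertions separately and by symmetric arguments: that the mode-1 prediction weights $\xi_i$ depend, up to normalization, only on the row sum $\rho_{i(\cdot)}$, and that for each fixed mode-1 topic $i$ the mode-2 conditional weights $\theta_{ij}$ depend only on the cell count $\rho_{ij}$. Occupied and empty topics are treated by different tools: the rich-get-richer property governs the occupied entries, the strong partition property pins down the empty entries, and exchangeability is what upgrades ``function of the count within a fixed partition'' to ``function of the count, period.''

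First, for occupied mode-1 topics, fix a partition $\boldsymbol\rho$. Part~1 of Definition~\ref{hRGR} gives $\xi_i>\xi_j \iff \rho_{i(\cdot)}>\rho_{j(\cdot)}$ whenever both row sums are positive; negating the strict inequalities forces $\xi_i=\xi_j$ whenever $\rho_{i(\cdot)}=\rho_{j(\cdot)}$. Hence within $\boldsymbol\rho$ the quantity $\xi_i$ is a well-defined, strictly increasing function of $\rho_{i(\cdot)}$, so $\xi_i=f_{\boldsymbol\rho}(\rho_{i(\cdot)})$ for some $f_{\boldsymbol\rho}$ that a priori depends on the whole partition. Applying Part~2 of Definition~\ref{hRGR} within a fixed row $i$ gives, identically, $\theta_{ij}=g_{i,\boldsymbol\rho}(\rho_{ij})$ on the occupied cells of that row.

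Next, for empty topics and cells, rich-get-richer is silent, so I would invoke the strong partition property. For any two empty cells $(i,j)$ and $(k,l)$ of $\boldsymbol\rho$, adding one new customer to either cell produces two partitions $\boldsymbol\rho+\boldsymbol E_{ij}$ and $\boldsymbol\rho+\boldsymbol E_{kl}$ (where $\boldsymbol E_{ij}$ increments cell $(i,j)$ by one) that are element permutations of each other, since they differ only by a transposition of the single unit between two previously-zero cells. The strong partition property then yields $P(\boldsymbol\rho+\boldsymbol E_{ij})=P(\boldsymbol\rho+\boldsymbol E_{kl})$, and by exchangeability each side equals $P(\boldsymbol\rho)$ times the corresponding one-step assignment probability, so $\xi_i\theta_{ij}=\xi_k\theta_{kl}$ over all empty cells. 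Taking $k=i$ shows $\theta_{ij}$ is a single constant across the empty cells of row $i$, which I name $\propto\gamma_i$; swapping two entirely empty rows (again an element permutation) then gives a common value $\propto\gamma_0$ for $\xi_i$ on empty rows. This produces the $\gamma_0$ and $\gamma_i$ branches.

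The remaining, and hardest, step is to strip the partition dependence from $f_{\boldsymbol\rho}$ and $g_{i,\boldsymbol\rho}$, that is, to exhibit a single $f$ and a single $g_i$ valid across all partitions; this is precisely the strength that exchangeability adds beyond the rich-get-richer/strong-partition pair. I would exploit order-independence: adding two customers to two distinct blocks must give the same partition probability in either order. Writing $\xi_i=w_i(\boldsymbol\rho)/Z(\boldsymbol\rho)$ with unnormalized weight $w_i$ and $Z=\sum_l w_l$, the order-independence identity, together with the fact that by the strong partition property both $Z$ and the partition probabilities are symmetric functions of the multiset of block sizes, forces $w_i(\boldsymbol\rho)$ to depend on $\boldsymbol\rho$ only through $\rho_{i(\cdot)}$, up to a partition-wide multiplicative constant that the ``$\propto$'' absorbs. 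Equivalently, the mode-1 row-sum marginal is itself an exchangeable partition obeying the strict partition property, so its one-step prediction rule is a size-only function by the same symmetry mechanism already carried out for a single Dirichlet node, now invoked structurally rather than for a specific distribution. Running the identical argument on each fixed row delivers the size-only $g_i$, completing the claimed form. I expect the delicate point to be the bookkeeping showing that the normalizer genuinely cancels, so that no residual dependence on the rest of the partition survives.
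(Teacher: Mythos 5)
Your proposal follows essentially the same route as the paper's proof: the rich-get-richer biconditional pins down the occupied entries as functions of the counts alone, the strong partition property supplies the identity $\xi_i\theta_{ij}=\xi_k\theta_{kl}$ and the constant values on empty cells, and exchangeability is invoked to strip the residual dependence on the rest of the partition. Your write-up is in fact considerably more explicit than the paper's, which asserts these same three steps in as many sentences without the supporting arguments you supply.
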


We show that this formulation is sufficient to satisfy the strong partition property and necessary to satisfy the rich-get-richer and exchangeability properties.

Next, we show that these functions must be linear.

\begin{lemma}
Given the assumptions and results of Lemma \ref{function form lemma}, $f$ and $g_i$ (for all i) are linear.
\label{linear lemma}
\end{lemma}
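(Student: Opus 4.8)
The plan is to use exchangeability as a constraint on the sequential, customer-by-customer construction of the partition, and to show that it forces the first differences of $f$ and of each $g_i$ to be constant. By Lemma \ref{function form lemma}, a customer joins an occupied mode-1 topic of size $\rho_{i(\cdot)}$ with weight proportional to $f(\rho_{i(\cdot)})$ and, conditional on having chosen mode-1 topic $i$, joins an occupied mode-2 topic of size $\rho_{ij}$ with weight proportional to $g_i(\rho_{ij})$, with new topics carrying the constant weights $\gamma_0$ and $\gamma_i$. The probability of one ordered arrival sequence of the $n$ customers is then the product of these normalized addition weights. Since exchangeability asserts that this probability is invariant under permuting which customer arrives when (each customer still landing in the same topic), any two orderings that differ by a single adjacent transposition must yield the same product; this is the equation I will exploit.

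First I would treat $f$ through the mode-1 marginal process, whose partition is the vector of row sums $\rho_{i(\cdot)}$ and whose per-customer law is exactly $\xi_i\propto f(\rho_{i(\cdot)})$, and which inherits exchangeability from $\boldsymbol\rho$. Fix two adjacent customers that land in distinct occupied mode-1 topics $i$ and $j$, of sizes $a$ and $b$. In both orders the two numerators contributed are $f(a)$ and $f(b)$, and writing $S$ for the total occupied mode-1 weight just before the pair arrives, the first-step normalizer $S+\gamma_0$ agrees across orders while only the second-step normalizer differs. Matching the two products then reduces to
\begin{equation}
f(a+1)-f(a)=f(b+1)-f(b),
\end{equation}
which must hold for all $a,b\ge 1$. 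A constant first difference gives $f(c)=f(1)+(c-1)\bigl(f(2)-f(1)\bigr)$, i.e. $f$ is linear.

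The identical device handles each $g_i$, now applied to the mode-2 sub-process conditioned on mode-1 topic $i$. Fix two adjacent customers both assigned to mode-1 topic $i$ but to distinct mode-2 topics of sizes $a$ and $b$. The mode-1 factors and their normalizers agree in both orders, because the mode-1 count under $i$ passes through the same values $a'\to a'+1\to a'+2$ regardless of order; the mode-2 numerators $g_i(a)$ and $g_i(b)$ agree; and only the second-step mode-2 normalizer differs, yielding $g_i(a+1)-g_i(a)=g_i(b+1)-g_i(b)$ for all $a,b\ge 1$, hence linearity of $g_i$.

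The delicate point, and the step I expect to require the most care, is the bookkeeping that isolates exactly one differing factor under the transposition: I must verify that swapping two adjacent arrivals alters neither the partial configuration preceding the pair nor the configuration following it, so that every normalizer cancels except the one evaluated between the two swapped steps, and I must confirm that the marginal (mode-1) and conditional (mode-2 given mode-1) processes genuinely inherit exchangeability from the full partition. Once that reduction is secured the conclusion is immediate, since a function on the positive integers with constant first difference is affine.
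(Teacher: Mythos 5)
Your proof is correct and runs on the same engine as the paper's: exchangeability applied to two orderings that differ by an adjacent transposition, with every factor cancelling except the single normalizer evaluated between the two swapped steps. The difference is which transposition you choose. You swap two arrivals into two \emph{distinct occupied} topics of sizes $a$ and $b$, which yields the constant-first-difference equation $f(a+1)-f(a)=f(b+1)-f(b)$ and hence the affine form $f(c)=f(1)+(c-1)\bigl(f(2)-f(1)\bigr)$. The paper instead swaps an arrival into an occupied topic of size $x-1$ with the arrival that \emph{opens a new} topic, which yields $f(x)=f(x-1)+f(1)$ and hence $f(x)=x f(1)$ by induction --- a strictly stronger conclusion, since it forces the intercept to vanish. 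Both orderings are legitimate, so both functional equations are consequences of exchangeability; your weaker one already recovers exactly the affine family $\xi_i\propto\rho_{i(\cdot)}-\gamma_{02}$ that the paper itself states as the most general form at the end of its proof and that the proof of Theorem \ref{DNE} actually uses, whereas the paper's equation, taken literally, pins $\gamma_{02}=0$. So your route buys the form needed downstream without over-constraining the intercept, at the cost of not determining it; the paper's route buys a sharper normalization. The bookkeeping concern you flag is genuinely the crux, and your choice of transposition makes it easier than the paper's: because neither of your two swapped customers opens a new topic, the number of unoccupied topics carrying weight $\gamma_0$ (respectively $\gamma_i$) is unchanged across the pair, so the first-step normalizers match exactly and only the second-step normalizer differs --- a cancellation the paper's new-topic transposition has to argue slightly more carefully.
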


We use the formula from Lemma \ref{function form lemma} to compute the probabilities for two different orderings. Given the exchangeability property, these formulations must be equal. From this, we conclude that $f$ must be linear.

Finally, we use the linear form to show that such a model does not exist.

\begin{theorem}
\label{DNE}
There does not exist a hierarchical topic model ($p=2$) where the rich-get-richer, strong partition, and exchangeability properties all hold.
\end{theorem}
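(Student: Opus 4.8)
The plan is to argue by contradiction, building directly on Lemmas \ref{function form lemma} and \ref{linear lemma}. Suppose a three-mode hierarchical topic model satisfying all three properties exists. By those lemmas the assignment rule is forced to be the generalized nCRP with $\xi_i\propto \rho_{i(\cdot)}-\gamma_{02}$ for occupied mode-1 topics (and $\propto\gamma_{01}+\gamma_{02}K_1$ for a fresh one) and $\theta_{ij}\propto\rho_{ij}-\gamma_{i2}$ for occupied mode-2 topics (and $\propto\gamma_{i1}+\gamma_{i2}K_2$ for a fresh one). It therefore suffices to show that no choice of the constants $\gamma_{01},\gamma_{02},\gamma_{i1},\gamma_{i2}$ makes this process obey the strong partition property; that contradiction closes the theorem.

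First I would write down the probability of a fixed partition $\boldsymbol\rho$. By the exchangeability property, $P(\boldsymbol\rho)$ equals the (permutation-invariant) multinomial coefficient $n!/\prod_{ij}\rho_{ij}!$ times the probability of any one assignment order. Processing the customers row by row, and column within row, that sequence probability factors into a mode-1 term depending only on the row sums $\rho_{i(\cdot)}$ and, for each occupied row $i$, a within-row term; the net dependence on $\boldsymbol\rho$ takes the generalized-CRP form $\prod_{i}\Gamma(\rho_{i(\cdot)}-\gamma_{02})$ for mode 1 and $\prod_{i}\prod_{j}\Gamma(\rho_{ij}-\gamma_{i2})$ for the cell values, divided by the within-row normalizers $\prod_i\Gamma(\gamma_{i1}+\rho_{i(\cdot)})$, all multiplied by "open-a-new-topic" constants that depend only on the number of occupied rows and on the number of occupied columns in each row. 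The strong partition property demands that this expression be invariant under an arbitrary permutation of the entries of $\boldsymbol\rho$, including permutations that move mass between rows or change which topics are occupied.

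The contradiction then comes from evaluating this invariance on a short list of explicit partitions with identical entry multisets. A preliminary step is to note that permuting a whole row's worth of entries into a different row forces the per-row parameters to coincide, i.e. $\gamma_{i1}\equiv\gamma_1$ and $\gamma_{i2}\equiv\gamma_2$. Next I would compare, for the multiset $\{1,1\}$, the partition with both customers in one row against the partition with the two customers as singletons in two distinct rows, and carry out the analogous comparison for $\{1,1,1\}$; equating probabilities yields two relations with the same left-hand side but right-hand sides $\gamma_{01}+\gamma_{02}$ and $\gamma_{01}+2\gamma_{02}$, which forces $\gamma_{02}=0$. With the mode-1 discount gone, a single cross-row swap of two unequal cell values (say one turning the row sums $(4,3)$ into $(5,2)$ while keeping every cell occupied, so that all new-topic constants and cell-value factors cancel) isolates the ratio $\tfrac12\cdot(\gamma_1+4)/(\gamma_1+2)$, and invariance forces $\gamma_1=0$. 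But $\gamma_{i1}=\gamma_1=0$ makes the probability of opening the first column of a freshly opened row equal to $0/0$, so the within-row rule is not a well-defined distribution, contradicting that we have a bona fide model. Hence no model with all three properties exists.

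I expect the main obstacle to be precisely the reason a single swap is not enough: the mode-1 row-sum factor $\prod_i\Gamma(\rho_{i(\cdot)}-\gamma_{02})$ and the within-row normalizer $\prod_i\Gamma(\gamma_{i1}+\rho_{i(\cdot)})$ partially cancel, and in fact cancel exactly along the locus $\gamma_{02}=-\gamma_{i1}$, so any counterexample based only on occupancy-preserving cross-row swaps can be defeated by a suitable discount. The crux is therefore to combine occupancy-preserving swaps with occupancy-changing permutations, which probe the new-row and new-column constants separately, so as to over-determine all four free parameters and drive them to the degenerate, and hence impossible, values above.
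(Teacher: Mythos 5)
Your proposal follows the same overall strategy as the paper---invoke Lemmas \ref{function form lemma} and \ref{linear lemma} to force the generalized-nCRP form, then contradict invariance of the partition probability under permutations of the entries of $\boldsymbol\rho$---but the execution is genuinely different and in fact more careful than the paper's own proof. The paper performs a single occupancy-preserving swap of two cells in distinct rows and writes the resulting ratio $\nu$ as a row-sum factor $\omega$ times a cell-value factor, then exhibits $\omega\neq 1$ numerically at $\gamma_{02}=0$. What the paper's $\nu$ omits is precisely what you flag: the within-row normalizers $\prod_i\Gamma(\gamma_{i1}+\rho_{i(\cdot)})$ also change under a cross-row swap, and they cancel $\omega$ identically on the locus $\gamma_{i1}=-\gamma_{02}$, so the single-swap argument alone does not rule out every parameter choice. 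Your repair---row permutations to force uniform per-row parameters, then the occupancy-changing comparisons of the multisets $\{1,1\}$ and $\{1,1,1\}$ to pin $\gamma_{02}=0$, and only then the occupancy-preserving swap sending row sums $(4,3)$ to $(5,2)$, whose ratio I confirm is $\tfrac12(\gamma_1+4)/(\gamma_1+2)$ and hence forces $\gamma_1=0$---is sound and closes this loophole; what it buys is a proof that actually covers all admissible $(\gamma_{01},\gamma_{02},\gamma_1,\gamma_2)$ rather than a representative slice. Two points to tighten: dispatch explicitly the degenerate edge cases where an intermediate probability vanishes (e.g.\ $\gamma_{01}+\gamma_{02}=0$ or $\gamma_{02}=1$), and note that your final contradiction reads the first-column choice in a fresh row as an ill-defined $0/0$, whereas a skeptic could declare that forced move to have probability $1$, leaving a Pitman--Yor-type within-row process with concentration $0$; it is safer to finish with one more comparison (three singletons in one row versus three singleton rows forces $2\gamma_2^2=\gamma_{01}^2$ on top of the earlier relation $\gamma_2=\gamma_{01}$, hence $\gamma_{01}=0$ and a trivial model), which removes any reliance on the $0/0$ reading.
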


We exhibit in Appendix \ref{Proofs} an example to show that the formulation in Lemma \ref{linear lemma} violates the strict partition property.

However, the independent trees model allows all three properties. This follows from the properties of each independent CRP.

\begin{theorem}
The exchangeability, loose partition, and rich-get-richer properties hold for the independent trees model.
\end{theorem}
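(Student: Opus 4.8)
The plan is to reduce all three properties to the corresponding well-known facts about a single Chinese Restaurant Process, exploiting that the independent trees model (at a single level, with $p=2$) is nothing but two independent CRPs run in parallel: one produces the mode-1 assignment and one the mode-2 assignment, and $\boldsymbol\rho$ simply records their joint contingency table. The key observation tying everything together is that the row sums $\rho_{i(\cdot)}$ are exactly the table occupancies of the mode-1 CRP, and the column sums $\rho_{(\cdot)j}$ are exactly those of the mode-2 CRP, so each property is controlled by a single mode's CRP.

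First I would establish exchangeability. For a single CRP with concentration $\gamma_1$, the probability of a seating of $n$ customers into occupied tables of sizes $n_1,\dots,n_K$ is $\gamma_1^{K}\prod_{t}(n_t-1)!$ divided by the rising factorial $\gamma_1(\gamma_1+1)\cdots(\gamma_1+n-1)$, where $K$ is the number of occupied tables; this is an expression in the occupancy counts alone and is therefore invariant to the order in which customers arrive. Since the two CRPs are independent, the probability of any labeled joint seating is a product of two such order-free factors. The event $\{\boldsymbol\rho=\text{const}\}$ is determined by the joint seating, so summing these order-independent probabilities over the fiber of seatings that realize a fixed $\boldsymbol\rho$ yields $P(\boldsymbol\rho\mid\text{order})=P(\boldsymbol\rho\mid\Pi(\text{order}))$.

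Next, for the loose partition property I would use the table-relabeling symmetry of the CRP: the count-based seating probability above is symmetric under permuting table labels. A permutation of the rows of $\boldsymbol\rho$ is a relabeling of mode-1 tables and a permutation of the columns is a relabeling of mode-2 tables, so applying row permutation $\pi_1$ and column permutation $\pi_2$ sets up a probability-preserving bijection between the fiber realizing $\boldsymbol\rho$ and the fiber realizing $\Pi(\boldsymbol\rho)$; by independence this gives $P(\boldsymbol\rho)=P(\Pi(\boldsymbol\rho))$ for any row/column permutation $\Pi$. I would remark that this yields only the loose and not the strict property precisely because permuting individual entries of $\boldsymbol\rho$ would alter the row- and column-sum multisets, and hence the marginal CRP probabilities, consistent with Theorem \ref{DNE}. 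Finally, the rich-get-richer property (Definition \ref{iRGR}) is immediate from the CRP predictive rule: the incoming customer joins occupied mode-1 table $i$ with probability $\xi_i=\rho_{i(\cdot)}/(\gamma_1+n)$ and mode-2 table $j$ with probability $\theta_j=\rho_{(\cdot)j}/(\gamma_2+n)$, where $n=\sum_{i,j}\rho_{ij}$ and the independence lets $\theta$ drop its dependence on the mode-1 topic. Each is strictly increasing in its own count, so $\xi_i>\xi_j\iff\rho_{i(\cdot)}>\rho_{j(\cdot)}$ and $\theta_i>\theta_j\iff\rho_{(\cdot)i}>\rho_{(\cdot)j}$.

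I expect the main obstacle to be the bookkeeping in the exchangeability and loose-partition steps: because $\boldsymbol\rho$ is a strict coarsening of the labeled joint seating (many seatings map to one count matrix), I must argue at the level of the fiber sum and verify that the two per-mode invariances combine correctly, rather than conflating $\boldsymbol\rho$ with the pair of marginal partitions $(\rho_{i(\cdot)})$ and $(\rho_{(\cdot)j})$. Once the single-CRP exchangeability and relabeling-invariance are in hand, each property descends to the product essentially by the independence of the two chains, and the rich-get-richer step is a one-line consequence of the linear predictive rule.
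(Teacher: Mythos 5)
Your proposal is correct and takes the same route as the paper, which disposes of this theorem with the single remark that it ``follows from the properties of each independent CRP''; you are simply supplying the details (order-invariance and label-invariance of the single-CRP seating probability, the fiber-sum argument over labeled seatings realizing a given $\boldsymbol\rho$, and the linear predictive rule) that the paper leaves implicit. Your added observation that only the loose, not the strict, partition property survives is consistent with the paper's Theorem 4.3 and is a worthwhile clarification, but it is not a different method.
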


This conclusion rationalizes our choice of model with the loose partition property holding rather than the strong version.

\section{Algorithms}
\label{Algorithms}

We use a Gibbs sampler to compute the posterior model based on our prior and our samples. This algorithm alternates between drawing new paths through the hierarchical topic model and solving the Bayesian Tucker decomposition problem. This scheme is similar to that developed for hLDA by \cite{NIPS2003_2466}. We present a general overview of our Gibbs sampler in Algorithm \ref{Algorithm Overview}.

\begin{algorithm}[ht]
\small
Initialize hierarchical topic model and Bayesian Tucker decomposition \\
\For{$i=1,\cdots,I$}{
Draw Bayesian Tucker decomposition $\boldsymbol{\phi}, \boldsymbol{\psi}$ and latent topics $\boldsymbol{z}$ (Algorithm \ref{Collapsed Bayesian Tucker Decomposition Gibbs Sampler} or \ref{Non-Collapsed Bayesian Tucker Decomposition Gibbs Sampler}) \\
Draw hierarchical topic model $\boldsymbol{T}$ (Algorithm \ref{Independent Trees Algorithm}, \ref{Collapsed PAM-Based Hierarchical Topic Model Algorithm}, or \ref{Non-Collapsed PAM-Based Hierarchical Topic Model Algorithm}) \\
}
\caption{Algorithm Overview}
\label{Algorithm Overview}
\end{algorithm}

In Section \ref{Bayesian Tucker Decomposition}, we present two algorithms for sampling from the Bayesian Tucker decomposition, and in Section \ref{Hierarchical Topic Models}, we present Gibbs samplers for both the independent trees and PAM-based topic models. In Appendix \ref{variations}, we propose variations and modifications that can be made to these models to improve performance.

\subsection{Sampling from Bayesian Tucker Decomposition}
\label{Bayesian Tucker Decomposition}

Here, we present a collapsed Gibbs sampler for our Bayesian Tucker decomposition, Algorithm \ref{Collapsed Bayesian Tucker Decomposition Gibbs Sampler}, which we derive in Appendix \ref{Collapsed Gibbs}. We give a non-collapsed version, Algorithm \ref{Non-Collapsed Bayesian Tucker Decomposition Gibbs Sampler}, in Appendix \ref{Non-Collapsed Algorithms}. When applying a topic model, we constrain these algorithms so that $\phi_{x\boldsymbol k}$ is positive for $\boldsymbol k\in \boldsymbol{T}_x$ and zero for $\boldsymbol k\not\in \boldsymbol{T}_x$, for all $x$.

While these algorithms are based on the \cite{yang_dunson_2016} algorithm, they did not derive a collapsed sampler, and their model uses a different decomposition form. Our collapsed algorithm makes sampling easier by integrating out the Dirichlet distributions. Here, $n_x^k$ is the count of topic $k$ given sample $x$, $m^{(j)}_{h y}$ is the count of feature $y$ in  mode $j$ given topic $h$, and a superscript $-xi$ indicates omitting count $i$. Recall that the bold font implies $\boldsymbol{m}^{(j)}_{h}$ is a vector over $ y$s.

\begin{algorithm}[ht]
\small
\For{$x=1,\cdots,d_0$}{
\For{$i=1,\cdots,\lambda_x$}{
Compute latent topic probabilities $P\left(\boldsymbol z_{i}^{(x)}=\boldsymbol{k}\ \big|\ \boldsymbol{n}^{-xi},\boldsymbol{m}^{-xi},\boldsymbol{\alpha},\boldsymbol{\beta}\right)\propto\left(n_x^{\boldsymbol k,-xi}+\alpha_{\boldsymbol k}\right)\prod\limits_{j=1}^p\frac{m^{(j),-xi}_{k_j y_j}+\beta_{y_j}^{(j)}}{\sum_{y=1}^{d_j}m^{(j),-xi}_{k_j y}+\beta_{y}^{(j)}}$ \\
Draw latent topic $\boldsymbol z_{i}^{(x)}$ from $P\left(\boldsymbol z_{i}^{(x)}=\boldsymbol{k}\ \big|\ \boldsymbol{n}^{-xi},\boldsymbol{m}^{-xi},\boldsymbol{\alpha},\boldsymbol{\beta}\right)$ \\
}
}
\caption{Collapsed Bayesian Tucker Decomposition Gibbs Sampler}
\label{Collapsed Bayesian Tucker Decomposition Gibbs Sampler}
\end{algorithm}

\subsection{Hierarchical Topic Models}
\label{Hierarchical Topic Models}

Next, we present Gibbs samplers for the independent trees and PAM-based hierarchical topic models. For the PAM-based model, we give both non-collapsed and collapsed versions.

Algorithm \ref{Independent Trees Algorithm} describes how paths are drawn through an independent trees model. This algorithm is a generalization of the hLDA algorithm by \cite{NIPS2003_2466}. Here, $m_{c_{x,\ell},y}^{(j),-x}$ is the count of features $y$ assigned to topic $c_{x,\ell}$ in mode $j$, $m_{c_{x,\ell},(\cdot)}^{(j),-x}:=\sum\limits_{y=1}^{d_j} m_{c_{x,\ell},y}^{(j),-x}$, and subscript $-x$ indicates omitting sample $x$.

\begin{algorithm}[ht]
\small
\For{$j=1,\cdots,p$}{
\For{$x=1,\cdots,d_0$}{
\For{$\ell=1,\cdots,L_j$}{
$P\left(\boldsymbol{Y}_x^{(j)}|\boldsymbol{Y}_{-x}^{(j)},\boldsymbol{c}^{(j)},\boldsymbol{Z}^{(j)}\right)=\frac{\Gamma\left(m_{c_{x,\ell},(\cdot)}^{(j),-x}+d_j\beta^{(j)}\right)}{\prod_y\Gamma\left(m_{c_{x,\ell},y}^{(j),-x}+\beta^{(j)}\right)}\frac{\prod_y\Gamma\left(m_{c_{x,\ell},y}^{(j)}+\beta^{(j)}\right)}{\Gamma\left(m_{c_{x,\ell},(\cdot)}^{(j)}+d_j\beta^{(j)}\right)}$ \\
$P\left(c_{x,\ell}^{(j)}|\boldsymbol{c}_{-x\ell}^{(j)}\right)=$ prior imposed by CRP \\
$P\left(c_{x,\ell}^{(j)}|\boldsymbol{Y}^{(j)},\boldsymbol{c}_{-x,\ell}^{(j)},\boldsymbol{Z}^{(j)}\right)\propto P\left(\boldsymbol{Y}_x^{(j)}|\boldsymbol{Y}_{-x}^{(j)},\boldsymbol{c}_\ell^{(j)},\boldsymbol{Z}^{(j)}\right) P\left(c_{x,\ell}^{(j)}|\boldsymbol{c}_{-x\ell}^{(j)}\right)$
}
}
}
\caption{Independent Trees Algorithm}
\label{Independent Trees Algorithm}
\end{algorithm}

We provide collapsed (Algorithm \ref{Collapsed PAM-Based Hierarchical Topic Model Algorithm}) and non-collapsed (Algorithm \ref{Non-Collapsed PAM-Based Hierarchical Topic Model Algorithm} in Appendix \ref{Non-Collapsed Algorithms}) algorithms to draw paths through the PAM-based model. The collapsed algorithm is derived by integrating out the Dirichlet distribution and dropping a constant. \cite{Li:2006:PAD:1143844.1143917} and \cite{Mimno2007} gave collapsed Gibbs algorithms for four-layered PAM and hPAM models but not for arbitrary DAGs. This algorithm uses an arbitrary PAM-based hierarchical structure as described in Appendix \ref{Generalizations}. In Algorithm \ref{Collapsed PAM-Based Hierarchical Topic Model Algorithm}, ${n}_{\boldsymbol{i}k}^{(\ell,j)}$ is the count of customers assigned to topic $k$ in the mode $j$ at level $\ell$ and parent topics $\boldsymbol{i}$.

\begin{algorithm}[ht]
\small
\For{$\ell=1,\cdots,L$}{
\For{$j=1,\cdots,p$}{
\uIf{$\ell\neq 1$ or $j$ is not a root mode}{
\For{$x=1,\cdots,d_0$}{
$P\left(\boldsymbol{Y}_x^{(j)}|\boldsymbol{Y}_{-x}^{(j)},\boldsymbol{c}^{(j)},\boldsymbol{Z}^{(j)}\right)=\frac{\Gamma\left(m_{k,(\cdot)}^{(j),-x}+d_j\beta^{(j)}\right)}{\prod_y\Gamma\left(m_{k,y}^{(j),-x}+\beta^{(j)}\right)}\frac{\prod_y\Gamma\left(m_{k,y}^{(j)}+\beta^{(j)}\right)}{\Gamma\left(m_{k,(\cdot)}^{(j)}+d_j\beta^{(j)}\right)}$ \\
$P\left(c_{x,\ell}^{(j)}=k|\boldsymbol{Y}^{(j)},\boldsymbol{c}_{-x,\ell}^{(j)},\boldsymbol{Z}^{(j)}\right)\propto P\left(\boldsymbol{Y}_x^{(j)}|\boldsymbol{Y}_{-x}^{(j)},\boldsymbol{c}_\ell^{(j)},\boldsymbol{Z}^{(j)}\right) \left(\gamma_{\boldsymbol{i}k}^{(\ell,j)} + n_{\boldsymbol{i}k}^{(\ell,j)}\right)$
}
}
}
}
\caption{Collapsed PAM-Based Hierarchical Topic Model Algorithm}
\label{Collapsed PAM-Based Hierarchical Topic Model Algorithm}
\end{algorithm}

\section{Model Evaluation}
\label{Model Evaluation}

First, we describe the cancer and ASD data used to train and analyze our models in Section \ref{data sets}. We compare the coherence of various models trained on all three data sets in Section \ref{Coherence}. We analyze a sample set of topics and show that they correspond with known risk factors in Section \ref{Most Prevalent Pathways}. We compare the run times and algorithms' complexity in Section \ref{Complexity}. Then, we use a held-out non-parametric likelihood estimate to compare our cancer models to each other, described in Appendix \ref{Likelihood}. In Appendix \ref{Classsification}, we discuss classification results for cancer.

\subsection{Data Sets}
\label{data sets}

The first data set we examined is from The Cancer Genome Atlas and contains patients with one of four common types of cancer: breast, lung, prostate, or colorectal. Genetic factors are believed to affect the patient's risk of developing cancer. This data has a clear hierarchical structure: variants are on genes, which are part of biological pathways,\footnote{We define a pathway as a set of genes working together for a specific biological function. We do not consider the nature of interactions between genes, only their membership in the pathway.} which are within patients. Furthermore, pathways and genetic variants can be hierarchically grouped based on functions and interactions. An HBT decomposition would help classify types of cancer, where we have two modes of feature variables (genetic variants and pathways) and a clear hierarchical structure of groupings of genetic variants and pathways. We used the Reactome pathway data set \citep{pmid29145629,pmid24243840} to determine which genes are in each pathway.

Another data set we analyzed is from the National Database for Autism Research and contains paired siblings -- one with ASD and another without -- and counts of their genetic variants. We used the same process for including pathway information as the cancer data. ASD is a group of neurodevelopment disorders defined by a range of behavioral patterns and difficulty with social interaction. The Centers for Disease Control and Prevention estimates that 1 in 68 children have ASD, but it is more prevalent in boys than girls. Currently, ASD is typically diagnosed by parent and doctor observation of a child's behavior and development.
Experts believe ASD is caused by various genetic and environmental factors \citep{asdFact}.

More details about the data are in Appendix \ref{A. data sets}.

\subsection{Coherence}
\label{Coherence}

We compared the coherence of models trained on the cancer and ASD data. \cite{xing-etal-2019-evaluating} stated that co-occurrence and posterior-based methods are common evaluation techniques for LDA models. The main posterior method, topic stability, can be undermined by high-frequency genes and pathways (in our case), causing poor topics to have high stability. \cite{newman-etal-2010-automatic} showed that pointwise mutual information (PMI/UCI) consistently outperformed other methods in correlation with human subjects. However, we also provide \citepos{Mimno11} measure (UMass), a coherence measure intended for use without an external reference corpus. For both measures, the higher the value, the more coherent a topic or set of topics. Details on our implementation of these measures are in Appendix \ref{Coherence Measure}.

In practice, clinicians could examine the top genes or pathways in each topic to determine their structure. In Section \ref{Most Prevalent Pathways}, we analyze an example model in this fashion. However, this analysis is subjective, while coherence gives a similar objective comparison of models.

While most of the models we analyzed were Bayesian and hierarchical (using the independent trees model), the CP TensorLy model \citep{JMLR:v20:18-277} was deterministic and non-hierarchical.
In our tables, we bolded the best-performing measures for emphasis. We further explain our methodology in Appendix \ref{Coherence Models}.

\begin{figure}[t]
    \centering
    \includegraphics[scale=.94]{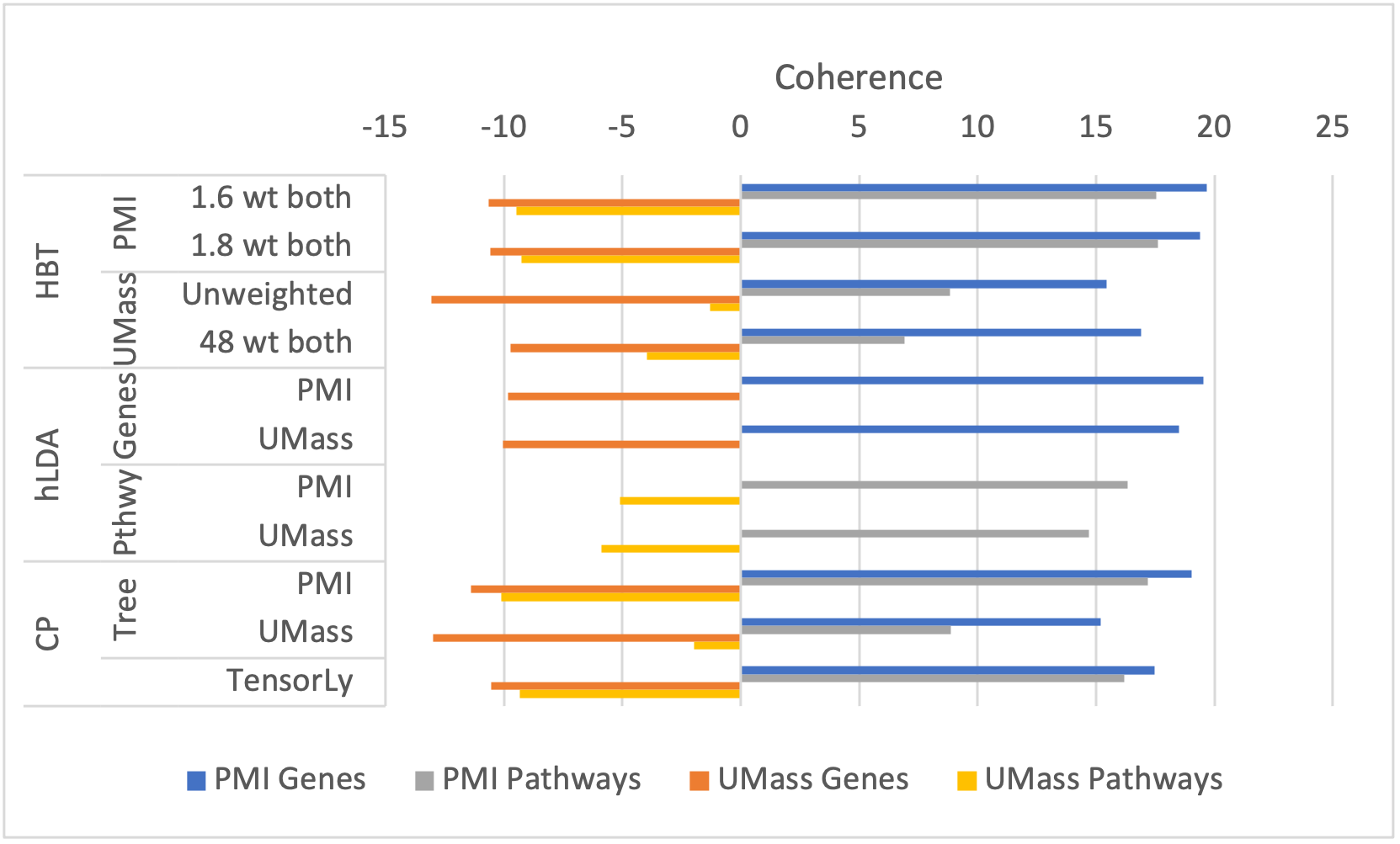}
    \caption{Independent Tree HBT models are more coherent than comparable models on cancer data. Our best HBT models outperformed the hLDA, CP tree, and CP TensorLy baselines on three-of-four coherence measures (using the mean over ten cross-validation folds).}
    \label{Cancer Coherence}
\end{figure}

\begin{figure}[t]
    \centering
    \includegraphics[scale=.94]{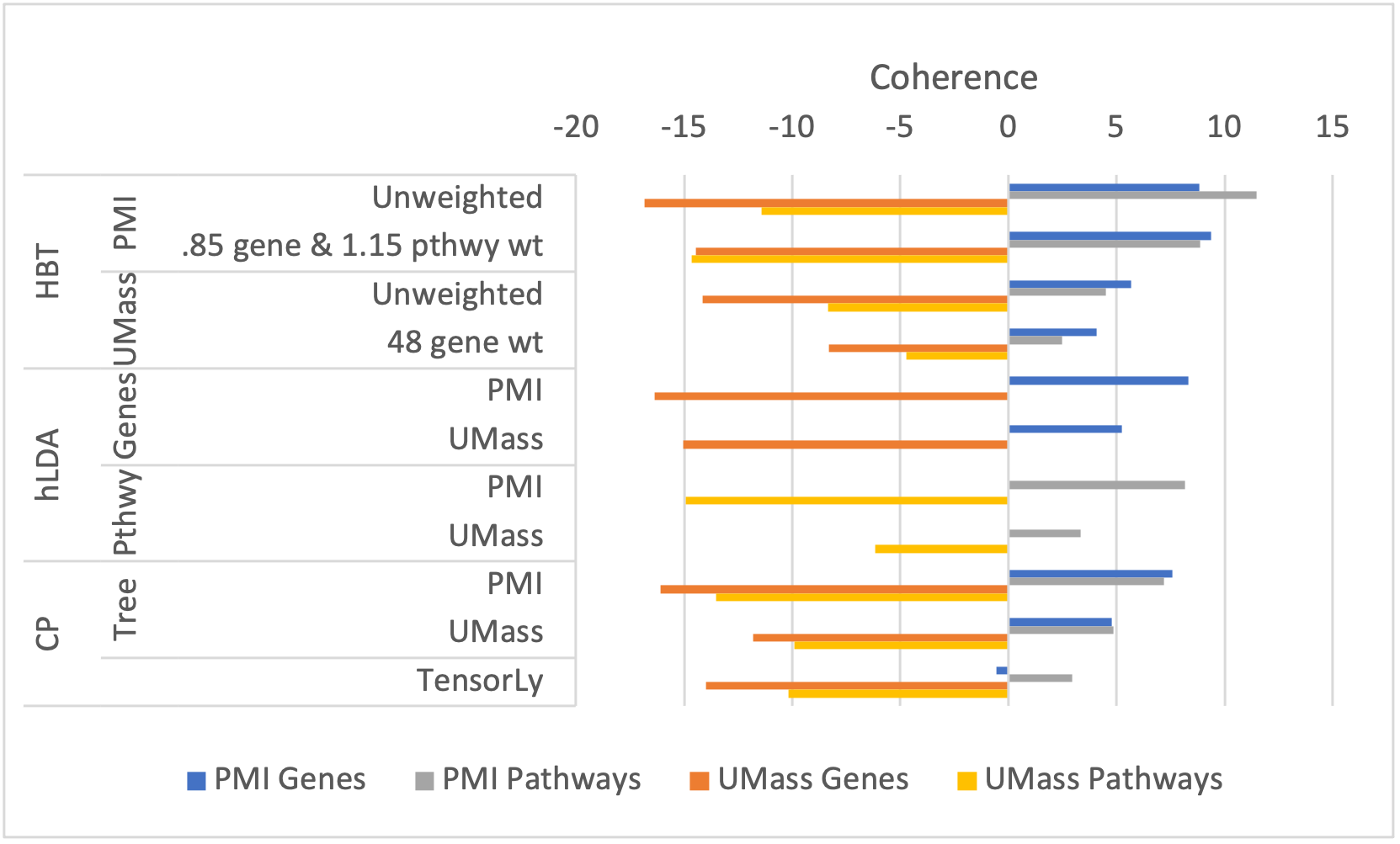}
    \caption{Independent Tree HBT models are more coherent than comparable models on ASD data. Our best HBT models outperformed the hLDA, CP tree, and CP TensorLy baselines on all four coherence measures (using the mean over ten cross-validation folds).}
    \label{ASD Coherence}
\end{figure}

Figures \ref{Cancer Coherence} and \ref{ASD Coherence} show that the best HBT models' topics were more coherent than the baseline models' (hLDA, CP tree, and CP TensorLy) on cancer and ASD data. On cancer data, our most significant improvement was on UMass pathways coherence. The unweighted HBT model using UMass coherence outperformed the best baseline, the CP tree model using UMass coherence, by 34.52\%. However, on the other three measures, the best HBT models outperformed the best baseline models by only 0.76\%-2.44\%. On ASD data, we saw more significant improvement in all four measures. Our smallest relative improvement was on PMI gene coherence, where the HBT model using PMI coherence and .85 gene and 1.15 pathway weighting outperformed hLDA using PMI coherence by 12.65\%. Our most significant improvement margin was on PMI pathway coherence, where the unweighted HBT model using PMI coherence outperformed hLDA using PMI coherence by 40.56\%. While no model performed best on all measures, the HBT models using PMI coherence outperformed the baseline models on all PMI coherence measures, and the weighted HBT UMass models outperformed the baselines on all UMass coherence measures.

In Appendix \ref{Reuters}, we applied an HBT structure to a different context, natural language processing, and achieved promising results. Here, our tensor consisted of articles, phrases, and words. While HBT performed worse than CP TensorLy on UMass phrase coherence, the HBT model using UMass coherence outperformed the best baseline models by up to 58\% on the other three coherence measures.

Generally, we found that HBT generates more coherent (and thus more interpretable) topics than other models. The choice of a specific model depends on the coherence measure, relative priority and structure of modes, number of topics, memory requirements, etc. Other models have their drawbacks. hLDA can only evaluate one mode at a time. CP tree imposes a more rigid structure, with a single topic hierarchy for both modes. CP Tensorly does not provide a hierarchical topic structure and is not as well suited for sparse counting tensors as Bayesian models.

\subsection{Most Prevalent Pathways}
\label{Most Prevalent Pathways}

\begin{table}[t]
\caption{Top Five Pathways in HBT ASD Model's Topics}
\label{top pathways}
\centering
\small
\begin{tabular}{Y{35mm}|Y{35mm}|Y{35mm}}
    Immune Function & Neurotransmission & Lipid Homeostasis  \\ \hline
    Interferon gamma signaling & Glutamate Neurotransmitter Release Cycle & Cell-extracellular matrix interactions \\
    Downstream TCR signaling & Intrinsic Pathway of Fibrin Clot Formation & CREB phosphorylation through the activation of Ras \\
    Phosphorylation of CD3 and TCR zeta chains & Ligand-dependent caspase activation & Post-translational protein phosphorylation \\
    MHC class II antigen presentation & Neurexins and neuroligins & BC transporters in lipid homeostasis \\
    Generation of second messenger molecules & Axonal growth inhibition (RHOA activation) & Defective Mismatch Repair Associated With MSH6 \\
\end{tabular}
\end{table}

Here, we examine three topics from a HBT ASD model. Their most prevalent pathways (shown in Table \ref{top pathways}) correspond to immune function, neurotransmission, and lipid homeostasis. The immune function \citep{Hughes2018, Meltzer2016, Onore2011} and neurotransmission \citep{Quaak2013, Cetin15} correspond to known ASD risk factors while growing literature suggests dyslipidemia may contribute to the development of ASD \citep{Luo2020}. The structure of these topics, with immune function as the root and neurotransmission and lipid homeostasis as branches, may provide additional insight into the relative importance of these factors, their prevalence, and their interrelationships. The six biomarker categories used by \cite{Abruzzo2015} were "neurotransmitters and neurotrophins, oxidative stress markers, fatty acids and phospholipids, inflammation markers, metabolites, toxic biomarkers, and metals and cations." Two of these categories overlap with the topics we found (neurotransmission and lipid homeostasis), supporting the assertion that HBT's topics are clinically relevant.

\subsection{Complexity and Run Time}
\label{Complexity}

\begin{figure*}[t]
    \centering
    \subfigure{
        \centering
        \includegraphics[scale=0.4]{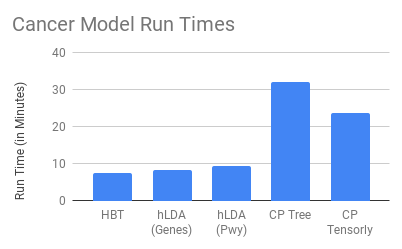}
    }
    \subfigure{
        \centering
        \includegraphics[scale=0.4]{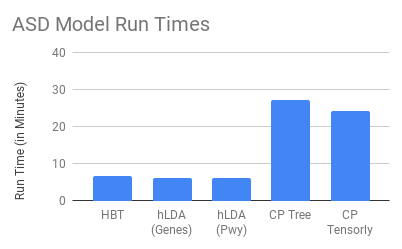}
    }
    \caption{Time to train a decomposition model on Cancer and ASD data.}
    \label{Run Times}
\end{figure*}

Figure \ref{Run Times} shows that the training time required for our HBT model is closer to those of hLDA than the CP models. Our model had the quickest training time on the cancer data and only lagged slightly behind the hLDA models on ASD. This is likely because the auxiliary matrices $\boldsymbol{\psi}$ and hierarchical trees are smaller as core tensor $\boldsymbol{\phi}$ condenses the information. In HBT, a 25x20 core tensor can convey 500 patient topics, 25 gene topics, and 20 pathway topics, while 500 patient topics in CP correspond to 500 gene topics and 500 pathway topics. This shows that our model scales better to multiple modes than comparable baselines. The run times may vary based on hardware and software (see \ref{Experiments}).

\section{Conclusion}
\label{Conclusion}

We developed methods for a Bayesian Tucker decomposition and designed strategies to incorporate dependent and independent hierarchical topic models. The independent trees model uses independent CRPs to generate hierarchical structures for each mode, while the PAM-based model creates a single hierarchical structure across modes. Furthermore, we presented a blueprint for generalizing these models to more than two feature modes by extending our independent and PAM-based hierarchical structures to account for more complex mode dependencies. We generalized the properties of CRP to multiple modes and proved that the strict versions of these properties cannot apply to these hierarchical models. Additionally, we derived a collapsed Gibbs sampler for the Bayesian Tucker decomposition with an arbitrary number of feature modes. Lastly, we trained our models on real-world examples and found that our models' topics were more coherent than existing methods.

In the context of genetic data, HBT describes each patient by their gene and pathway topics. These topics are arranged hierarchically so that the lower-level topics more finely differentiate groups of patients. Since genes make up pathways, PAM takes advantage of this relationship and provides a combined hierarchy of the gene and pathway topics. This model can analyze patient groups based on their shared genetic variants and pathways.

\section*{Acknowledgements}

Research reported in this publication was supported, in part, by the National Library of Medicine  [grant number T32LM012203]. The content is solely the responsibility of the authors and does not necessarily represent the official views of the National Institutes of Health.

\bibliographystyle{elsarticle-num-names}
\bibliography{bib}

\appendix
\appendixpage

\section{Non-Collapsed Algorithms}
\label{Non-Collapsed Algorithms}

\begin{algorithm}[H]
\small
\For{$x=1,\cdots,d_0$}{
Draw core tensor $\tilde{\boldsymbol\phi}_x\sim\text{Dir}(\boldsymbol\alpha+\boldsymbol{n}_x)\in \S_K$ \\
\For{$\boldsymbol k\in\K$}{
$\phi_{x\boldsymbol k}=\tilde{\phi}_{x\text{vec}(\boldsymbol k)}$ \\
}
}
\For{$j=1,\cdots,p$}{
\For{$k=1,\cdots,K_j$}{
Draw auxiliary matrices $\boldsymbol\psi_{k}^{(j)}\sim\text{Dir}\left(\boldsymbol\beta^{(j)}+\boldsymbol{m}^{(j)}_{h}\right)\in \S_{d_j}$ \\
}
}
\For{$x=1,\cdots,d_0$}{
\For{$i=1,\cdots,\lambda_x$}{
Compute topic probabilities $P\left(\boldsymbol z_{i}^{(x)}=\boldsymbol{k}|-\right)\propto\phi_{x\boldsymbol k}\psi_{ky_i}^{(j)}$ \\
Draw $\boldsymbol z_{i}^{(x)}$ from $P\left(\boldsymbol z_{i}^{(x)}=\boldsymbol{k}|-\right)$ \\
}
}
\caption{Non-Collapsed Bayesian Tucker Decomposition Gibbs Sampler}
\label{Non-Collapsed Bayesian Tucker Decomposition Gibbs Sampler}
\end{algorithm}

\begin{algorithm}[H]
\small
\For{$\ell=1,\cdots,L$}{
\For{$j=1,\cdots,p$}{
\uIf{$\ell\neq 1$ or $j$ is not a root mode}{
\For{$\boldsymbol{i}\in\{\text{possible parent topics}\}$}{
$\boldsymbol{P}_{\boldsymbol{i}}^{(\ell,j)}\sim\text{Dir}\left(\boldsymbol{\gamma}_{\boldsymbol{i}}^{(\ell,j)} + \boldsymbol{n}_{\boldsymbol{i}}^{(\ell,j)}\right)$ \\
}
\For{$x=1,\cdots,d_0$}{
$P\left(\boldsymbol{Y}_x^{(j)}|\boldsymbol{Y}_{-x}^{(j)},\boldsymbol{c}^{(j)},\boldsymbol{Z}^{(j)}\right)=\frac{\Gamma\left(m_{k,(\cdot)}^{(j),-x}+d_j\beta^{(j)}\right)}{\prod_y\Gamma\left(m_{k,y}^{(j),-x}+\beta^{(j)}\right)}\frac{\prod_y\Gamma\left(m_{k,y}^{(j)}+\beta^{(j)}\right)}{\Gamma\left(m_{k,(\cdot)}^{(j)}+d_j\beta^{(j)}\right)}$ \\
$P\left(c_{x,\ell}^{(j)}=k|\boldsymbol{Y}^{(j)},\boldsymbol{c}_{-x,\ell}^{(j)},\boldsymbol{Z}^{(j)}\right)\propto P\left(\boldsymbol{Y}_x^{(j)}|\boldsymbol{Y}_{-x}^{(j)},\boldsymbol{c}_\ell^{(j)},\boldsymbol{Z}^{(j)}\right) P_{\boldsymbol{i}k}^{(\ell,j)}$
}
}
}
}
\caption{Non-Collapsed PAM-Based Hierarchical Topic Model Algorithm}
\label{Non-Collapsed PAM-Based Hierarchical Topic Model Algorithm}
\end{algorithm}

\section{Collapsed Gibbs Derivation}
\label{Collapsed Gibbs}

Here, we derive equations for collapsed Gibbs sampling of a conditional Bayesian Tucker decomposition. This derivation is similar to that of LDA. We begin with the total probability of our model and integrate out $\boldsymbol\phi$ and $\boldsymbol\psi$:
\begin{equation*}
\begin{aligned}P(\boldsymbol Y, \boldsymbol Z|\boldsymbol\alpha, \boldsymbol\beta)= &\mathop{\mathlarger{\int}}_{\boldsymbol\phi}\mathop{\mathlarger{\int}}_{\boldsymbol\psi}P(\boldsymbol Y, \boldsymbol Z, \boldsymbol\phi, \boldsymbol\psi|\boldsymbol\alpha, \boldsymbol\beta)\ d\boldsymbol\psi\  d\boldsymbol\phi \\
= &\mathop{\mathlarger{\int}}_{\boldsymbol\psi}\prod\limits_{j=1}^p \prod\limits_{h=1}^{K_j}P\left(\boldsymbol\psi_{h}^{(j)}|\boldsymbol\beta\right) \prod\limits_{i=1}^{\lambda_{x}} P\left(\boldsymbol y_{xi}|\boldsymbol\psi_{\boldsymbol z_{xi}}\right) d\boldsymbol\psi \\
& \times \mathop{\mathlarger{\int}}_{\boldsymbol\phi} \prod\limits_{x=1}^{d_0} P\left(\boldsymbol\phi_{x}|\boldsymbol\alpha\right) \prod\limits_{i=1}^{\lambda_{x}} P\left(\boldsymbol z_{xi}|\boldsymbol\phi_{x}\right)d\boldsymbol\phi.\end{aligned}\end{equation*}
All $\boldsymbol\psi$'s and $\boldsymbol\phi$'s are independent of each other and thus can be treated separately. We first examine the $\boldsymbol\phi$'s:
\begin{equation*}\mathop{\mathlarger{\int}}_{\boldsymbol\phi} \prod\limits_{x=1}^{d_0} P\left(\boldsymbol\phi_{x}|\boldsymbol\alpha\right) \prod\limits_{i=1}^{\lambda_{x}} P\left(\boldsymbol z_{xi}|\boldsymbol\phi_{x}\right)d\boldsymbol\phi=\prod\limits_{x=1}^{d_0} \mathop{\mathlarger{\int}}_{\boldsymbol\phi_x} P\left(\boldsymbol\phi_{x}|\boldsymbol\alpha\right) \prod\limits_{i=1}^{\lambda_{x}} P\left(\boldsymbol z_{xi}|\boldsymbol\phi_{x}\right)d\boldsymbol\phi_x.\end{equation*}
Now, we look at a single $\boldsymbol\phi$:
\begin{equation*}\begin{aligned}
&\mathop{\mathlarger{\int}}_{\boldsymbol\phi_x} P\left(\boldsymbol\phi_{x}|\boldsymbol\alpha\right) \prod\limits_{i=1}^{\lambda_{x}} P\left(\boldsymbol z_{xi}|\boldsymbol\phi_{x}\right)d\boldsymbol\phi_x \\
&=\mathop{\mathlarger{\int}}_{\boldsymbol\phi_x} \frac{\Gamma\left(\sum_{k=1}^{K_1\cdots K_j}\alpha_k\right)}{\prod_{k=1}^{K_1\cdots K_j}\Gamma(\alpha_k)}\prod\limits_{k=1}^{K_1\cdots K_j}\phi_{xk}^{\alpha_k-1} \prod\limits_{i=1}^{\lambda_{x}} P\left(\boldsymbol z_{xi}|\boldsymbol\phi_{x}\right)d\boldsymbol\phi_x.
\end{aligned}\end{equation*}
Letting $n_x^k$ denote the count of topic(s) $k$ given independent variable $x$, we can express
\begin{equation*}\prod\limits_{i=1}^{\lambda_{x}} P\left(\boldsymbol z_{xi}|\boldsymbol\phi_{x}\right)=\prod\limits_{k=1}^{K_1\cdots K_j}\phi_{xk}^{n_x^k}.\end{equation*}
Thus, the $\boldsymbol\phi_x$ integral can be rewritten as
\begin{equation*}\begin{aligned}
&\mathop{\mathlarger{\int}}_{\boldsymbol\phi_x} \frac{\Gamma\left(\sum_{k=1}^{K_1\cdots K_j}\alpha_k\right)}{\prod_{k=1}^{K_1\cdots K_j}\Gamma(\alpha_k)}\prod\limits_{k=1}^{K_1\cdots K_j}\phi_{xk}^{\alpha_k-1} \prod\limits_{k=1}^{K_1\cdots K_j}\phi_{xk}^{n_x^k}d\boldsymbol\phi_x \\
&=\mathop{\mathlarger{\int}}_{\boldsymbol\phi_x} \frac{\Gamma\left(\sum_{k=1}^{K_1\cdots K_j}\alpha_k\right)}{\prod_{k=1}^{K_1\cdots K_j}\Gamma(\alpha_k)}\prod\limits_{k=1}^{K_1\cdots K_j}\phi_{xk}^{n_x^k+\alpha_k-1}d\boldsymbol\phi_x.
\end{aligned}\end{equation*}
According to the functional expression of the Dirichlet distribution,
\begin{equation*}\mathop{\mathlarger{\int}}_{\boldsymbol\phi_x} \frac{\Gamma\left(\sum_{k=1}^{K_1\cdots K_j}n_x^k+\alpha_k\right)}{\prod_{k=1}^{K_1\cdots K_j}\Gamma\left(n_x^k+\alpha_k\right)}\prod\limits_{k=1}^{K_1\cdots K_j}\phi_{xk}^{n_x^k+\alpha_k-1}d\boldsymbol\phi_x=1.\end{equation*}
We apply this equation to get rid of the integral, resulting in a fraction made up of products of Gamma functions,
\begin{equation*}\begin{aligned}
&\mathop{\mathlarger{\int}}_{\boldsymbol\phi_x} P\left(\boldsymbol\phi_{x}|\boldsymbol\alpha\right) \prod\limits_{i=1}^{\lambda_{x}} P\left(\boldsymbol z_{xi}|\boldsymbol\phi_{x}\right)d\boldsymbol\phi_x \\ =&\mathop{\mathlarger{\int}}_{\boldsymbol\phi_x} \frac{\Gamma\left(\sum_{k=1}^{K_1\cdots K_j}\alpha_k\right)}{\prod_{k=1}^{K_1\cdots K_j}\Gamma(\alpha_k)}\prod\limits_{k=1}^{K_1\cdots K_j}\phi_{xk}^{n_x^k+\alpha_k-1}d\boldsymbol\phi_x \\
=&\frac{\Gamma\left(\sum_{k=1}^{K_1\cdots K_j}\alpha_k\right)}{\prod_{k=1}^{K_1\cdots K_j}\Gamma(\alpha_k)} \frac{\prod_{k=1}^{K_1\cdots K_j}\Gamma\left(n_x^k+\alpha_k\right)}{\Gamma\left(\sum_{k=1}^{K_1\cdots K_j}n_x^k+\alpha_k\right)} \\
&\times \mathop{\mathlarger{\int}}_{\boldsymbol\phi_x} \frac{\Gamma\left(\sum_{k=1}^{K_1\cdots K_j}n_x^k+\alpha_k\right)}{\prod_{k=1}^{K_1\cdots K_j}\Gamma(n_x^k+\alpha_k)}\prod\limits_{k=1}^{K_1\cdots K_j}\phi_{xk}^{n_x^k+\alpha_k-1}d\boldsymbol\phi_x \\
=&\frac{\Gamma\left(\sum_{k=1}^{K_1\cdots K_j}\alpha_k\right)}{\prod_{k=1}^{K_1\cdots K_j}\Gamma(\alpha_k)} \frac{\prod_{k=1}^{K_1\cdots K_j}\Gamma\left(n_x^k+\alpha_k\right)}{\Gamma\left(\sum_{k=1}^{K_1\cdots K_j}n_x^k+\alpha_k\right)}.\end{aligned}\end{equation*}
Similarly, we derive the $\boldsymbol\psi$ part, letting $m^{(j)}_{h y}$ denote the count of dependent variable $y$ in the $j^{th}$ mode given topic $h$:
\begin{equation*}\begin{aligned}
\mathop{\mathlarger{\int}}_{\boldsymbol\psi}\prod\limits_{j=1}^p \prod\limits_{h=1}^{K_j}P\left(\boldsymbol\psi_{h}^{(j)}|\boldsymbol\beta\right) &\prod\limits_{i=1}^{\lambda_{x}} P\left(\boldsymbol y_{xi}|\boldsymbol\psi_{\boldsymbol z_{xi}}\right) d\boldsymbol\psi \\
=&\prod\limits_{j=1}^p \prod\limits_{h=1}^{K_j}\mathop{\mathlarger{\int}}_{\boldsymbol\psi_{h}^{(j)}}P\left(\boldsymbol\psi_{h}^{(j)}|\boldsymbol\beta\right) \prod\limits_{i=1}^{\lambda_{x}} P\left( y_{xi}^{(j)}|\boldsymbol\psi_{z_{xi}^{(j)}}^{(j)}\right) d\boldsymbol\psi_{h}^{(j)} \\
=&\prod\limits_{j=1}^p \prod\limits_{h=1}^{K_j}\mathop{\mathlarger{\int}}_{\boldsymbol\psi_{h}^{(j)}} \frac{\Gamma\left(\sum_{y=1}^{d_j}\beta_{y}^{(j)}\right)}{\prod_{y=1}^{d_j}\Gamma(\beta_{y}^{(j)})}\prod\limits_{y=1}^{d_j} \left(\psi_{z_{xi}^{(j)}y}^{(j)}\right)^{\beta_{y}^{(j)}-1} \\
&\times \prod\limits_{y=1}^{d_j} \left(\psi_{z_{xi}^{(j)}y}^{(j)}\right)^{m^{(j)}_{h y}} d\boldsymbol\psi_{h}^{(j)} \\
=&\prod\limits_{j=1}^p \prod\limits_{h=1}^{K_j}\mathop{\mathlarger{\int}}_{\boldsymbol\psi_{h}^{(j)}} \frac{\Gamma\left(\sum_{y=1}^{d_j}\beta_{y}^{(j)}\right)}{\prod_{y=1}^{d_j}\Gamma(\beta_{y}^{(j)})}\prod\limits_{y=1}^{d_j} \left(\psi_{z_{xi}^{(j)}y}^{(j)}\right)^{m^{(j)}_{h y}+\beta_{y}^{(j)}-1}d\boldsymbol\psi_{h}^{(j)} \\
=&\prod\limits_{j=1}^p \prod\limits_{h=1}^{K_j}\frac{\Gamma\left(\sum_{y=1}^{d_j}\beta_{y}^{(j)}\right)}{\prod_{y=1}^{d_j}\Gamma(\beta_{y}^{(j)})}\frac{\prod_{y=1}^{d_j}\Gamma\left(m^{(j)}_{h y}+\beta_{y}^{(j)}\right)}{\Gamma\left(\sum_{y=1}^{d_j}m^{(j)}_{h y}+\beta_{y}^{(j)}\right)}.
\end{aligned}\end{equation*}
By combining the expressions from the $\boldsymbol\phi$ and $\boldsymbol\psi$ parts, we obtain,
\begin{equation*}\begin{aligned}
P(\boldsymbol Y, \boldsymbol Z|\boldsymbol\alpha, \boldsymbol\beta)=
&\prod\limits_{x=1}^{d_0}\frac{\Gamma\left(\sum_{k=1}^{K_1\cdots K_j}\alpha_k\right)}{\prod_{k=1}^{K_1\cdots K_j}\Gamma(\alpha_k)} \frac{\prod_{k=1}^{K_1\cdots K_j}\Gamma\left(n_x^k+\alpha_k\right)}{\Gamma\left(\sum_{k=1}^{K_1\cdots K_j}n_x^k+\alpha_k\right)} \\
&\times \prod\limits_{j=1}^p \prod\limits_{h=1}^{K_j}\frac{\Gamma\left(\sum_{y=1}^{d_j}\beta_{y}^{(j)}\right)}{\prod_{y=1}^{d_j}\Gamma(\beta_{y})}\frac{\prod_{y=1}^{d_j}\Gamma\left(m^{(j)}_{h y}+\beta_{y}\right)}{\Gamma\left(\sum_{y=1}^{d_j}m^{(j)}_{h y}+\beta_{y}^{(j)}\right)}.
\end{aligned}\end{equation*} \\
Next, we need to derive an expression for the probability distribution of $\boldsymbol z_{i}^{(c)}$, which denotes the hidden variable(s) for the $i^{th}$ count in $x=c$, where $\boldsymbol y = \boldsymbol v$. Let a superscript $-ci$ denote the count, excluding the $i^{th}$ count in $x=c$. By Bayes' Theorem,
\begin{equation*}P(\boldsymbol z_{i}^{(c)}|\boldsymbol Z^{-ci},\boldsymbol Y, \boldsymbol\alpha,\boldsymbol\beta)=\frac{P(\boldsymbol z_{i}^{(x)},\boldsymbol Z^{-ci},\boldsymbol Y| \boldsymbol\alpha,\boldsymbol\beta)}{P(\boldsymbol Z^{-ci},\boldsymbol Y| \boldsymbol\alpha,\boldsymbol\beta)}.\end{equation*}
By applying this and dropping the denominator, we can express the probability distribution of $\boldsymbol z_{i}^{(c)}$ as proportional to the expression we derived above,
\begin{equation*}\begin{aligned}
&P(\boldsymbol z_{i}^{(c)}=\boldsymbol k|\boldsymbol Z^{-ci},\boldsymbol Y, \boldsymbol\alpha,\boldsymbol\beta)\propto P(\boldsymbol z_{i}^{(c)}=\boldsymbol k,\boldsymbol Z^{-ci},\boldsymbol Y| \boldsymbol\alpha,\boldsymbol\beta) \\
\propto&\left(\frac{\Gamma\left(\sum_{k=1}^{K_1\cdots K_j}\alpha_k\right)}{\prod_{k=1}^{K_1\cdots K_j}\Gamma(\alpha_k)}\right)^{d_0}\prod\limits_{x\neq c}\frac{\prod_{k=1}^{K_1\cdots K_j}\Gamma\left(n_x^k+\alpha_k\right)}{\Gamma\left(\sum_{k=1}^{K_1\cdots K_j}n_x^k+\alpha_k\right)} \\
&\times \prod\limits_{j=1}^p \left(\frac{\Gamma\left(\sum_{y=1}^{d_j}\beta_{y}^{(j)}\right)}{\prod_{y=1}^{d_j}\Gamma(\beta_{y}^{(j)})}\right)^{K_j}\prod\limits_{h=1}^{K_j}\prod\limits_{y\neq v_j}\Gamma\left(m^{(j)}_{h y}+\beta_{y}^{(j)}\right) \\
&\times \frac{\prod_{k=1}^{K_1\cdots K_j}\Gamma\left(n_c^k+\alpha_k\right)}{\Gamma\left(\sum_{k=1}^{K_1\cdots K_j}n_c^k+\alpha_k\right)}\prod\limits_{j=1}^p\prod\limits_{h=1}^{K_j} \frac{\Gamma\left(m^{(j)}_{h v_j}+\beta_{v_j}^{(j)}\right)}{\Gamma\left(\sum_{y=1}^{d_j}m^{(j)}_{h y}+\beta_{y}^{(j)}\right)}.\end{aligned}\end{equation*}
\normalsize
We simplify this expression by dropping multiplicative constants,
\begin{equation*}\begin{aligned}&\propto \frac{\prod_{k=1}^{K_1\cdots K_j}\Gamma\left(n_c^k+\alpha_k\right)}{\Gamma\left(\sum_{k=1}^{K_1\cdots K_j}n_c^k+\alpha_k\right)}\prod\limits_{j=1}^p\prod\limits_{h=1}^{K_j} \frac{\Gamma\left(m^{(j)}_{h v_j}+\beta_{v_j}^{(j)}\right)}{\Gamma\left(\sum_{y=1}^{d_j}m^{(j)}_{h y}+\beta_{y}^{(j)}\right)} \\
&\propto \prod\limits_{h=1}^{K_1\cdots K_j}\Gamma\left(n_c^h+\alpha_h\right)\prod\limits_{j=1}^p\prod\limits_{h=1}^{K_j} \frac{\Gamma\left(m^{(j)}_{h v_j}+\beta_{v_j}^{(j)}\right)}{\Gamma\left(\sum_{y=1}^{d_j}m^{(j)}_{h y}+\beta_{y}^{(j)}\right)}.\end{aligned}\end{equation*}
We now split this expression to obtain an $h$-independent summation, which can be dropped,
\begin{equation*}\begin{aligned}
\propto &\prod\limits_{h\neq \boldsymbol k}\Gamma\left(n_c^{h,-ci}+\alpha_h\right)\prod\limits_{j=1}^p\prod\limits_{h\neq k_j} \frac{\Gamma\left(m^{(j),-ci}_{h v_j}+\beta_{v_j}^{(j)}\right)}{\Gamma\left(\sum_{y=1}^{d_j}m^{(j),-ci}_{h y}+\beta_{y}^{(j)}\right)} \\
&\times \Gamma\left(n_c^{\boldsymbol k,-ci}+\alpha_{\boldsymbol k}+1\right)\prod\limits_{j=1}^p\frac{\Gamma\left(m^{(j),-ci}_{k_j v_j}+\beta_{v_j}^{(j)}+1\right)}{\Gamma\left(\sum_{y=1}^{d_j}m^{(j),-ci}_{k_j y}+\beta_{y}^{(j)}+1\right)} \\
\propto &\Gamma\left(n_c^{\boldsymbol k,-ci}+\alpha_{\boldsymbol k}\right)\prod\limits_{j=1}^p\frac{\Gamma\left(m^{(j),-ci}_{k_j v_j}+\beta_{v_j}^{(j)}\right)}{\Gamma\left(\sum_{y=1}^{d_j}m^{(j),-ci}_{k_j y}+\beta_{y}^{(j)}\right)} \\
&\times \left(n_c^{\boldsymbol k,-ci}+\alpha_{\boldsymbol k}\right)\prod\limits_{j=1}^p\frac{m^{(j),-ci}_{k_j v_j}+\beta_{v_j}^{(j)}}{\sum_{y=1}^{d_j}m^{(j),-ci}_{k_j y}+\beta_{y}^{(j)}} \\
\propto &\left(n_c^{\boldsymbol k,-ci}+\alpha_{\boldsymbol k}\right)\prod\limits_{j=1}^p\frac{m^{(j),-ci}_{k_j v_j}+\beta_{v_j}^{(j)}}{\sum_{y=1}^{d_j}m^{(j),-ci}_{k_j y}+\beta_{y}^{(j)}}.\end{aligned}\end{equation*}
\normalsize

\section{Generalizations to $p\ge3$}
\label{Generalizations}

\begin{figure}[t]
\centering
\subfigure[Hierarchical Model]{
\centering
\label{Hierarchical Model}
\hspace{10mm}
\begin{tikzpicture}[scale=.8, transform shape]
\tikzstyle{main}=[circle, minimum size =3mm, thick, draw =black!80, node distance = 4mm]
\tikzstyle{connect}=[-latex, thin]
\node[main] (1) {1};
\node[main] (2) [below left=8mm and 2mm of 1] {2};
\node[main] (3) [below right=8mm and 2mm of 1] {3};
\path (1) edge [connect] (2)
(1) edge [connect] (3);
\end{tikzpicture}
\hspace{10mm}}
\subfigure[Mixed Model]{
\centering
\label{Mixed Model}
\hspace{10mm}
\begin{tikzpicture}[scale=.8, transform shape]
\tikzstyle{main}=[circle, minimum size =3mm, thick, draw =black!80, node distance = 4mm]
\tikzstyle{connect}=[-latex, thin]
\node[main] (2) {2};
\node[main] (1) [below left=3mm and 6mm of 1] {1};
\node[main] (3) [below=8mm of 2] {3};
\path (2) edge [connect] (3);
\end{tikzpicture}
\hspace{10mm}
}
\caption{General $p$ Topic Model Examples ($p=3$)}
\end{figure}

For the independent topic model, we generalize to $p$ independent hierarchical models (such as trees), where topic tuples $\boldsymbol{T}_x$ are all possible combinations of $\boldsymbol{T}_x^{(1)},\ \cdots,$ and $\boldsymbol{T}_x^{(p)}$. For the hierarchical model, with general $p$, it is possible to have more complex dependence relations between modes. For example, Figure \ref{Hierarchical Model} shows a model where modes $2$ and $3$ depend on mode $1$ but do not directly depend on each other. To this end, we assume we are given a DAG representing the dependency structure of the $p$ modes. At each level, each customer moves through the DAG and selects a topic in each mode (ordered topologically) based on probability $\boldsymbol P_{\boldsymbol i}^{(\ell,m)}$, where $\ell$ is the level, $m$ is the mode, and $\boldsymbol i$ represents the topics of all parent modes. Similarly to the $p=2$ case, the tuples of topics visited at each level of the DAG or all possible tuples of elements in the topic lists for each mode correspond to the topic list for each $x$ (those topics with non-zero prevalence). Furthermore, it is possible to have a mixture of independent and hierarchical topic models; a specific mode or set of modes could be independent of the other modes. For example, Figure \ref{Mixed Model} presents a model where mode $1$ is independent of modes $2$ and $3$, while mode $3$ depends on mode $2$.

\section{Derivatives of Log-Likelihood}
\label{Derivatives}

We define Log-Likelihood $LL$ as the log of our model probability, found in Section \ref{Conditional HBTD}, \refp{hierarchical prob}.
\begin{equation*}\begin{aligned}
LL=&\log P(\boldsymbol Y, \boldsymbol Z, \boldsymbol\phi, \boldsymbol\psi, \boldsymbol T|\boldsymbol\alpha, \boldsymbol\beta,\boldsymbol\gamma) \\
=&\log P(\boldsymbol\psi|\boldsymbol\beta) + \log P(\boldsymbol\phi,\boldsymbol T|\boldsymbol\alpha, \boldsymbol\gamma) +\log P(\boldsymbol Y,\boldsymbol Z |\boldsymbol\phi, \boldsymbol\psi)
\end{aligned}\end{equation*}

First, we derive the derivative with respect to $\alpha_i$:
\begin{equation*}\begin{aligned}
\frac{\partial LL}{\partial \alpha_i}=&\frac{\partial}{\partial \alpha_i} \log P(\boldsymbol\phi,\boldsymbol T|\boldsymbol\alpha, \boldsymbol\gamma) \\
=&\frac{\partial}{\partial \alpha_i} \log\prod\limits_{x=1}^{d_0}P\left(\boldsymbol\phi_{x}|\boldsymbol\alpha, \boldsymbol T_{x}\right) P\left(\boldsymbol T_{x}|\boldsymbol\gamma\right) \\
=&\frac{\partial}{\partial \alpha_i}\sum\limits_{x=1}^{d_0} \log P\left(\boldsymbol\phi_{x}|\boldsymbol\alpha, \boldsymbol T_{x}\right) \\
=&\sum\limits_{x=1}^{d_0}\frac{1}{P\left(\boldsymbol\phi_{x}|\boldsymbol\alpha, \boldsymbol T_{x}\right)}\frac{\partial P\left(\boldsymbol\phi_{x}|\boldsymbol\alpha, \boldsymbol T_{x}\right)}{\partial \alpha_i} \\
=&\sum\limits_{x=1}^{d_0}\frac{B(\boldsymbol\alpha)}{\prod_{j=1}^{\boldsymbol K}\phi_{xj}^{\alpha_j-1}} \frac{\partial}{\partial \alpha_i} \left[\frac{1}{B(\boldsymbol\alpha)}\prod_{j=1}^{\boldsymbol K}\phi_{xj}^{\alpha_j-1}\right] \\
=&\sum\limits_{x=1}^{d_0}\left[B(\boldsymbol\alpha) \frac{\partial}{\partial \alpha_i} \frac{1}{B(\boldsymbol\alpha)} + \frac{\alpha_i-1}{\phi_{xi}}\right] \\
=&\sum\limits_{x=1}^{d_0}\left[\varPsi^{(0)}\left(\sum_{j=1}^{\boldsymbol K}\alpha_j\right)-\varPsi^{(0)}(\alpha_i) + \frac{\alpha_i-1}{\phi_{xi}}\right], \\
\end{aligned}\end{equation*}
where $B$ is the Beta function and $\varPsi^{(0)}$ is the Polygamma function.

Next, we similarly derive the derivative with respect to $\beta_i^{(j)}$:
\begin{equation*}\begin{aligned}
\frac{\partial LL}{\partial \beta_i^{(j)}}=&\frac{\partial}{\partial \beta_i^{(j)}}\log P(\boldsymbol\psi|\boldsymbol\beta)=\sum\limits_{h_j=1}^{K_j}\log P\left(\boldsymbol\psi_{h_j}^{(j)}\big|\boldsymbol\beta^{(j)}\right) \\
=&\sum\limits_{h_j=1}^{K_j}\left[\varPsi^{(0)}\left(\sum_{y=1}^{d_j}\beta_y^{(j)}\right)-\varPsi^{(0)}(\beta_i^{(j)}) + \frac{\beta_i^{(j)}-1}{\psi_{h_ji}^{(j)}}\right]
\end{aligned}\end{equation*}

\section{Conditions Proofs}
\label{Proofs}

First, we prove Theorem \ref{Loose Partition}, showing that the parameters of the Dirichlet distributions in our PAM model are symmetric.

\begin{proof}
First, we look at a single Dirichlet distribution, i.e., a single-node PAM model, and show that the partition property applies if and only if the parameters are symmetric. Without loss of generality, assume the parameters of the Dirichlet distribution $\boldsymbol\gamma=(\gamma_1,\cdots,\gamma_K)$ are ordered such that $\gamma_1 \le \cdots \le \gamma_K$, where $K$ is the number of topics. Also, by PAM $\boldsymbol\theta=(\theta_1,\cdots,\theta_K)\sim\text{Dir}(\boldsymbol\gamma)$.

First, we show sufficiency. To this end, assume $\gamma_p=\gamma_q:=\gamma$ for any $p,\ q$. Here, both the probability distribution and expectation of all $\theta$'s are equal (this is clear from examining the probability distribution function of the Dirichlet distribution). We denote the probability of $m_p$ people to topic $p$ and $m_q$ people to topic $q$, along with our topics assignment, as $P(m_p, m_q, -)$. Thus this probability,
\begin{equation*}\begin{aligned} P(m_p,m_q,-) & \propto \int_0^1 \int_0^1 P(m_p,m_q|\theta_p,\theta_q) P(\theta_p,\theta_q) d\theta_p d\theta_q \\
& \propto \int_0^1 \int_0^1 \theta_p^{m_p+\gamma-1}\theta_q^{m_q+\gamma-1} d\theta_pd\theta_q \\
& =\int_0^1 \theta^{m_p+\gamma-1} d\theta \int_0^1 \theta^{m_q+\gamma-1} d\theta,\end{aligned}\end{equation*}
is equal to the probability of assigning $m_q$ people to topic $p$ and $m_p$ people to topic $q$, \\
\begin{equation*}P(m_q,m_p,-) \propto\int_0^1 \theta^{m_p+\gamma-1} d\theta \int_0^1 \theta^{m_q+\gamma-1} d\theta.\end{equation*} \\
Similarly, since the probability distributions over all $\theta$'s are the same, the probability of assigning $m$ people to topics 1 through $K$ is equal to the probability of assigning any permutation of $m$ people to topics 1 through $k$. Thus the partition property holds.

Next, we show necessity. To this end, assume $\gamma_p<\gamma_q$ (for some $p$ and $q$). The probability of assigning $m_p$ people to topic $p$ and $m_q$ people to topic $q$,
\begin{equation*}
P(m_p,m_q,-) \propto \int_0^1\int_0^1 \theta_p^{m_p+\gamma_p-1}\theta_q^{m_q+\gamma_q-1}d\theta_pd\theta_q=\frac{1}{(m_p+\gamma_p)(m_q+\gamma_q)},
\end{equation*}
is not equal to the probability of assigning $m_q$ people to topic $p$ and $m_q$ people to topic $q$, \\
\begin{equation*}P(m_q,m_p,-) \propto\frac{1}{(m_p+\gamma_q)(m_q+\gamma_p)},\end{equation*} \\
for all $\ m_p$ and $m_q$. If $m_p<m_q$, then elementary algebra shows $(m_p+\gamma_p)(m_q+\gamma_q)<(m_p+\gamma_q)(m_q+\gamma_p)$ and if $m_p>m_q$, then $(m_p+\gamma_p)(m_q+\gamma_q)>(m_p+\gamma_q)(m_q+\gamma_p)$. Thus, the partition property does not hold.

If and only if the partition property holds for single nodes of the PAM model, it is possible to re-arrange topics in both modes (i.e., the loose partition property applies).
\end{proof}

Next, we prove Lemma \ref{function form lemma}, showing that for the given properties to hold, then $\boldsymbol\xi$ and $\boldsymbol\theta$ must be of the given form.

\begin{proof}
For such a model, $\xi_i=\xi_j$ if and only if $\rho_{i(\cdot)}=\rho_{j(\cdot)}$ and $\theta_{ki}=\theta_{kj}$ if and only if $\rho_{ki}=\rho_{kj}$. This is sufficient because of the strong partition property and necessary because of the rich-get-richer property. By the strong partition property and the chain rule, $\xi_i \theta_{ij}=\xi_k \theta_{kl}$ if $\rho_{ij}=\rho_{kl}$. Also, because of the rich-get-richer and exchangeability properties, we can express $\xi_i\propto\begin{cases} f\left(\rho_{i(\cdot)}\right), & \rho_{i(\cdot)}>0 \\ \gamma_0(K_1), & \rho_{i(\cdot)}=0\end{cases}$ and  $\theta_{ij}\propto\begin{cases} g_i(\rho_{ij}), & \rho_{ij}>0 \\ \gamma_i(K_2), & \rho_{ij}=0\end{cases}$.
\end{proof}

Then, we prove Lemma \ref{linear lemma}, showing that the functions must be linear.

\begin{proof} Suppose we want to assign $x$ people to topic 1 and one person to topic 2. One way (case one) to do this would be to assign all $x$ people to topic 1, then one person to topic 2. Another way (case two) to do this would be to assign $x-1$ people to topic 1, then one person to topic 2, then one more customer to topic 1. The probabilities of these cases can be expressed as:
\begin{equation*}\begin{aligned}
&P(\text{case one}) \\
= &\frac{f(1)}{\gamma_0(1)+f(1)}\cdots\frac{f(x-2)}{\gamma_0(1)+f(x-2)}\frac{f(x-1)}{\gamma_0(1)+f(x-1)}\frac{\gamma_0(1)}{\gamma_0(1)+f(x)} \\
&P(\text{case two}) \\
= &\frac{f(1)}{\gamma_0(1)+f(1)}\cdots\frac{f(x-2)}{\gamma_0(1)+f(x-2)}\frac{\gamma_0(1)}{\gamma_0(1)+f(x-1)}\frac{f(x-1)}{\gamma_0(2)+f(x-1)+f(1)}
\end{aligned}\end{equation*}
Thus, the differences in the probabilities are that the first case has $\gamma_0(1)+f(x)$ in the last denominator while the second case has $\gamma_0(2)+f(x-1)+f(1)$ in the last denominator. If the exchangeability property applies, we have $P(\text{case one})=P(\text{case two})$ and in turn $\gamma_0(1)+f(x)=\gamma_0(2)+f(x-1)+f(1)$. Since this must apply for all $x$, by induction we have $f(x)=x f(1)+\gamma_0(2)-\gamma_0(1)$. Thus $f$ is linear.

In general, $\xi_i\propto\begin{cases} \rho_{i(\cdot)}-\gamma_{02}, & \rho_{i(\cdot)}>0 \\ \gamma_{01}+\gamma_{02} K_1, & \rho_{i(\cdot)}=0\end{cases}$ and $\theta_{ij}\propto\begin{cases} \rho_{ij}-\gamma_{i2}, & \rho_{ij}>0 \\ \gamma_{i1}+\gamma_{i2} K_2, & \rho_{ij}=0\end{cases}$. Note that this is a generalized nCRP.
\end{proof}

Finally, we prove Theorem \ref{DNE}, showing that such a model does not exist.

\begin{proof}
Given the forms of $\boldsymbol\xi$ and $\boldsymbol\theta$ from Lemma \ref{linear lemma}, suppose we take two elements $(i,j)\neq(m,n),\ i\neq m$ and swap them within $\boldsymbol\rho$. Then the ratio $\nu$ of the original probability with the swapped probability is:
\begin{equation*}\begin{aligned}\omega(\rho_{ij},\rho_{mn})&:= &\frac{\Gamma(\rho_{m(\cdot)}-\gamma_{02})\Gamma(\rho_{i(\cdot)}-\gamma_{02})}{\Gamma(\rho_{m(\cdot)}-\rho_{mn}+\rho_{ij}-\gamma_{02})\Gamma(\rho_{i(\cdot)}-\rho_{ij}+\rho_{mn}-\gamma_{02})} \\
\nu &= &\omega(\rho_{ij},\rho_{mn})\frac{\Gamma(\rho_{ij}-\gamma_{i2})\Gamma(\rho_{mn}-\gamma_{m2})}{\Gamma(\rho_{ij}-\gamma_{m2})\Gamma(\rho_{mn}-\gamma_{i2})}.\end{aligned}\end{equation*} \\
If the strict partition property is satisfied, then $\nu=1$ for all $\rho_{ij},\ \rho_{mn},\ \rho_{i(\cdot)},$ and $\rho_{m(\cdot)}$. Note that $\frac{\Gamma(\rho_{ij}-\gamma_{i2})\Gamma(\rho_{mn}-\gamma_{m2})}{\Gamma(\rho_{ij}-\gamma_{m2})\Gamma(\rho_{mn}-\gamma_{i2})}=1$ for all $\rho_{ij}$ and $\rho_{mn}$ if and only if $\gamma_{i2}=\gamma_{m2}$, however $\omega(\rho_{ij},\rho_{mn})\neq 1$ for all $\rho_{ij},\ \rho_{mn},\ \rho_{i(\cdot)},$ and $\rho_{m(\cdot)}$. For example, if $\rho_{ij}=1,\ \rho_{mn}=2,\ \rho_{i(\cdot)}=3,\ \rho_{m(\cdot)}=3,$ and $\gamma_{02}=0$, then $\omega(\rho_{ij},\rho_{mn})=\frac{2!2!}{1!3!}=\frac{2}{3}\neq 1$. This shows that $\nu=1$ is not always possible.
\end{proof}

\section{Model Variations}
\label{variations}

In our efforts to boost the performance of our models, we implemented and developed several variations and modifications to our original methodology, including:

\begin{itemize}
    \item \textbf{Different seeds:} Train the model using ten different random number generator seeds, picking the best model based on either log-likelihood or coherence.
    \item \textbf{Keep best:} Check the log-likelihood or coherence during training every ten iterates, saving the best model rather than the final one. Another variation on this method is resetting to the current best model after each check.
    \item \textbf{MAP estimate:} When using the collapsed sampler, use the maximum a posteriori (MAP) estimate of the auxiliary matrices and core tensors rather than performing one non-collapsed iterate.
    \item \textbf{Adjust the number of counts:} For example, double the counts or set all non-zero counts to one.
    \item \textbf{Sparse cutoff:} Set a cutoff value, below which all proportional probabilities are set to zero.
    \item \textbf{Initialization:} Initialize the auxiliary matrices with those trained using a different method.
    \item \textbf{Relative number of topic model iterates to Bayesian Tucker:} Do two Bayesian Tucker sample iterations for every draw from the hierarchical topic model, or vice-versa.
    \item \textbf{Set a topics goal:} Decide on an ideal number of topics. Adjust $\gamma$ during training to reach that goal. The formula we used was:
    $$\gamma_{\text{new}}=\gamma_{\text{old}} * \max\left\{\min\left\{\left(\frac{\text{topics goal}}{\text{\# of topics}}\right)^{1/\prod_{i=1}^p (L_i-1)},2\right\},0.5\right\}.$$
    \item \textbf{Exponential weighting:} Apply an exponential weight to the relative probabilities in the Collapsed Gibbs Sampler (Algorithm \ref{Collapsed Bayesian Tucker Decomposition Gibbs Sampler}) as such:
    $$P\left(\boldsymbol z_{i}^{(x)}=\boldsymbol{k}|-\right)\propto\left(n_x^{\boldsymbol k,-xi}+\alpha_{\boldsymbol k}\right)\prod\limits_{j=1}^p\left[\frac{m^{(j),-xi}_{k_j y_j}+\beta_{y_j}^{(j)}}{\sum_{y=1}^{d_j}m^{(j),-xi}_{k_j y}+\beta_{y}^{(j)}}\right]^{w_p},$$
    where $w_p$ is the weight for mode $p$. This allows us to adjust the relative variance, i.e., the uniformity of each mode.
\end{itemize}

For our coherence measure experiments (Section \ref{Coherence} and Appendix \ref{Reuters}), we used different seeds, keep best, MAP estimate, topics goals, and exponential weighting. We used keep best for our classification models (Appendix \ref{Classsification}). While we tried other methods above, they did not perform as well. We did not use the above modifications for the likelihood models (Appendix \ref{Likelihood}).

\section{Experiments}
\label{Experiments}

Code available at \url{https://github.com/ars2240/asdHBTucker}. The decomposition algorithm was mainly written in MATLAB, using Tensor Toolbox \citep{bader_kolda_2021}. Sampling functions were written in C to improve run time. The preprocessing for the genetic data was done in R. The text data was preprocessed in Python, using Gensim \citep{rehurek2011gensim} and the Natural Language Toolkit (NLTK) \citep{bird2009natural}. The non-hierarchical CP decompositions were also done in Python, using TensorLy \citep{tensorly}. Experiments were run on a 3.1 GHz Dual-Core Intel Core i5 processor with 16 GB 2133 MHz LPDDR3 memory.

\subsection{Data Sets}
\label{A. data sets}

\begin{table}[t]
\caption{Cancer types and TCGA designation}
\label{cancer designation}
\centering
\small
\begin{tabular}{c|Y{95mm}}
    Cancer type & TCGA designation  \\ \hline
    Breast & Breast Invasive Carcinoma (BRCA) \\
    Lung & Lung Squamous Cell Carcinoma (LUSC) or Lung Adenocarcinoma (LUAD) \\
    Prostate & Prostate Adenocarcinoma (PRAD) \\
    Colorectal & Colon Adenocarcinoma (COAD) or Rectum Adenocarcinoma (READ)
\end{tabular}
\end{table}

The cancer data contains 3,037 patients. The Cancer Genome Atlas (TCGA) designations we considered for each type of cancer are given in Table \ref{cancer designation}. This data set contained 1,044 patients with breast cancer, 1,066 patients with lung cancer, 494 patients with prostate cancer, and 433 patients with colorectal cancer. After linking with the Reactome pathways, we were left with counts of variants on 7,846 genes and 1,678 pathways. This includes single-nucleotide polymorphisms and other mutations. We do not consider gene expression data.

Once we similarly linked the ASD genetic variants to the pathways, our data set contains 3,408 patients (half diagnosed with ASD), 7,211 genetic variants, and 1,413 pathways.

The Reuters data contains 5,501 articles, 8,820 phrases, and 6,837 unique words.

We split each data set into a 30\% held-out test set and performed 10-fold cross-validation (CV) on the remaining training/validation data.

\subsection{Coherence Measure}
\label{Coherence Measure}

To utilize this coherence measure in our context, we made some modifications: 1) our metric is intrinsic and does not utilize an external corpus to determine the gene or pathway probabilities and co-occurrence probabilities; and 2) we determined co-occurrence as having a variant on a pair of genes or pathways, not accounting for sequences as in many natural language processing examples.

\subsection{Coherence Models}
\label{Coherence Models}

To pre-process our data, we removed genes that appeared in fewer than 200 or more than 2,000 patients and words that appeared in fewer than 200 or more than 2,000 documents, eliminating rare and common genes or words. We also removed phrases that were present in fewer than 10 articles. Words without an assigned phrase were then assigned to a single-word phrase corresponding to the given word.
We modified \citepos{ding2020more} implementation to process the R8 data into the article, phrase, and word tensor structure. The CP TensorLy model was trained on the entire training data set, and then split into folds; while the other models were split into folds before fitting the decompositions. Our coherence measures examined the top 5 genes, pathways, sentences, or words in each topic.

Bayesian cancer and ASD models had a topic goal of 500, and R8 models had a topic goal of 50 (see Appendix \ref{variations}), while the CP TensorLy model had 200 topics for all data sets. The hierarchical models used three levels.

The HBT models used the independent trees hierarchy. We trained PAM-based models on the Cancer data and found the models slightly under-performed the independent trees models. The best PAM-based model had ten topics per level, used Genes as a dominant mode and Cartesian topics, and had a PMI genes coherence of 16.35 and PMI pathways coherence of 9.55.

The hLDA models used a modified version of the independent tree HBT model with only two modes. The CP tree model also used a modified independent tree HBT model, with a single hierarchical tree and restricting the core tensor to be diagonal. The CP TensorLy model uses the alternating-least squares method to decompose the tensor \citep{JMLR:v20:18-277}.

For the R8 models with bad phrases removed, we removed words and phrases that infrequently showed up in topics.

Experimenting on several HBT models showed that the model log-likelihood increased for the first 30-40 samples, then leveled off (with some random fluctuation). Based on this, we chose to run each model for 100 iterations, checking every 10 iterations and keeping the best model. We also used 10 different random seeds.

\subsection{Reuters Experiments}
\label{Reuters}

We also applied a similar structure from our genetic models to natural language processing.
Incorporating phrases as another mode may improve the grouping of articles. The hierarchical structure here would be: words make up phrases, which make up articles. We looked at the eight largest classes (earnings, acquisitions, money - foreign exchange, grain, crude, trade, interest, and shipping) in the Reuters-21578 data set (denoted R8) \citep{Dua:2019}. We used SpaCy \citep{spacy2} to group words into phrases (noun chunks). This gives us a count of words in each phrase in each article.

\begin{figure}[t]
    \centering
    \includegraphics[scale=0.94]{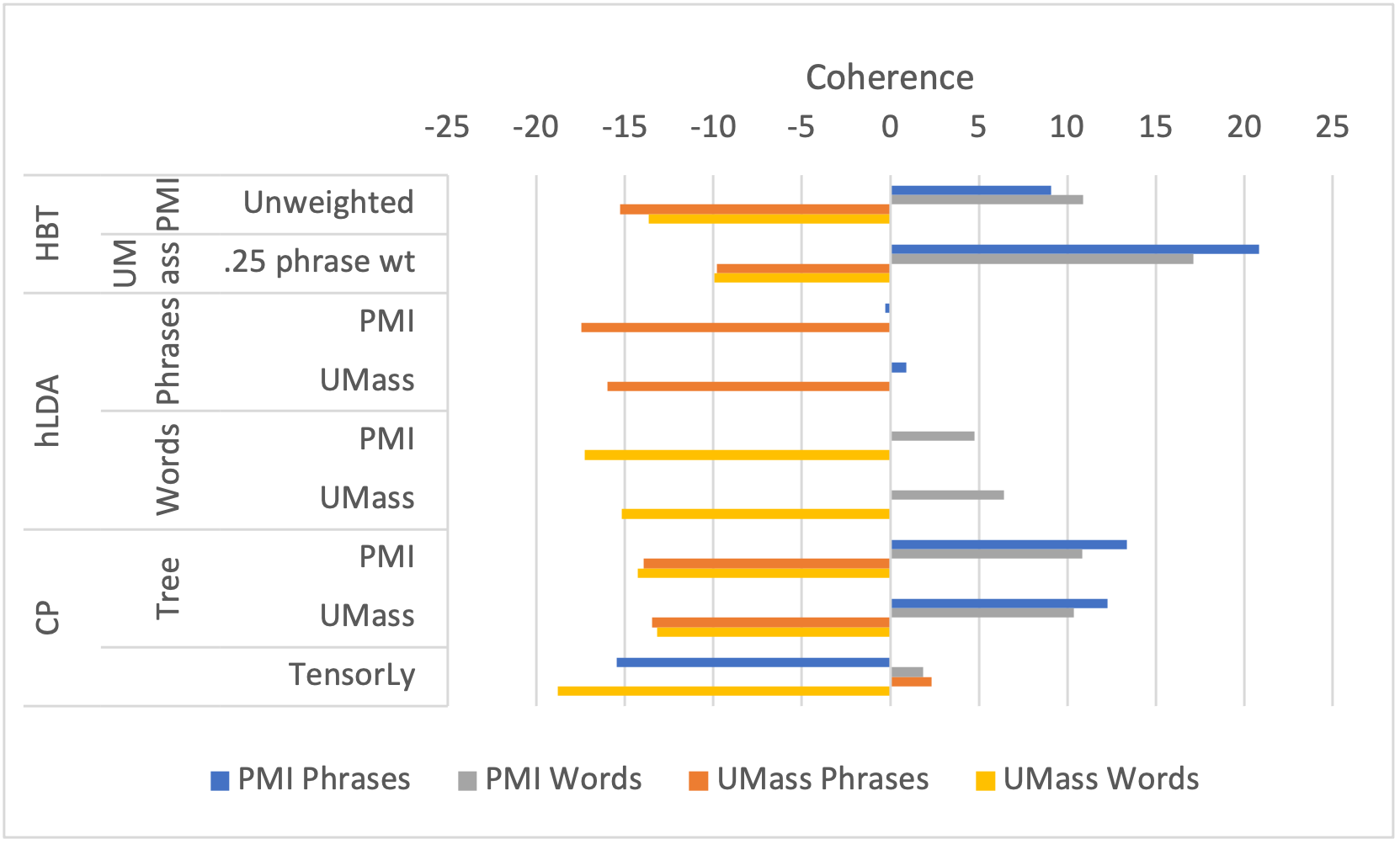}
    \caption{Independent Tree HBT models are more coherent than comparable models on R8 data. Our best HBT models outperformed the hLDA, CP tree, and CP TensorLy baselines on three-of-four coherence measures (using the mean over ten cross-validation folds).}
    \label{R8 Coherence}
\end{figure}

\begin{table*}[t]
\caption{Mean Validation Coherence of R8 Models with Bad Phrases Removed}
\label{R8 Coherence Bad Phrases}
\centering
\small
\begin{tabular}{l|r|r|r|r}
& \multicolumn{2}{c|}{Phrases} & \multicolumn{2}{c}{Words} \\ \cline{2-5}
Model & PMI & UMass & PMI & UMass \\ \hline \hline
HBT (PMI) & 11.80 & -12.91 & 12.38 & -12.37 \\ \hline
HBT (UMass) & \textbf{12.61} & -11.19 & \textbf{13.39} & \textbf{-10.79} \\ \hline \hline
CP (TensorLy) & -15.47 & \textbf{-6.64} & -4.00 & -20.04 \\
\end{tabular}
\end{table*}

While our best HBT model outperformed the baseline models on three-of-four coherence measures on the R8 data, it under-performed on UMass phrase coherence. Figure \ref{R8 Coherence} shows that the HBT model using UMass coherence and .25 phrase weighting outperformed the CP model using PMI coherence, the best baseline model, on PMI phrase and word coherence by 56.07\% and 57.84\% respectively. This HBT model also outperformed the CP model using UMass coherence by 24.54\% on UMass word coherence. However, while the CP TensorLy model performed worst on these three coherence measures, it outperformed our best model on UMass phrase coherence (2.35 to -9.79).
After examining this data set further, we removed words and phrases that frequently showed up in topics and had few co-occurrences with other words in the topic. While we tried various word sets and were able to make improvements, due to the Bayesian nature of the algorithms, words with fewer co-occurrences are more likely to be placed in the same topic than in a deterministic model. Table \ref{R8 Coherence Bad Phrases} shows that while these changes narrowed the gap in performance on UMass phrase coherence, the CP TensorLy model still outperformed our best HBT model on this measure.

\begin{figure*}[t]
    \centering
    \subfigure{
        \centering
        \includegraphics[scale=0.5]{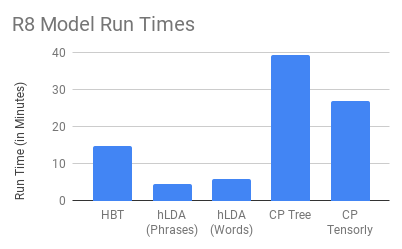}
    }
    \caption{Time to train a single decomposition model on R8 data.}
    \label{Run Time R8}
\end{figure*}

Figure \ref{Run Time R8} shows that while our algorithm takes longer to train on the R8 data than the ASD and Cancer data sets, it still significantly outperforms the CP models, supporting our conclusions in Section \ref{Complexity}.

\subsection{Classification}
\label{Classsification}

\begin{figure}[t]
    \centering
    \includegraphics[scale=0.94]{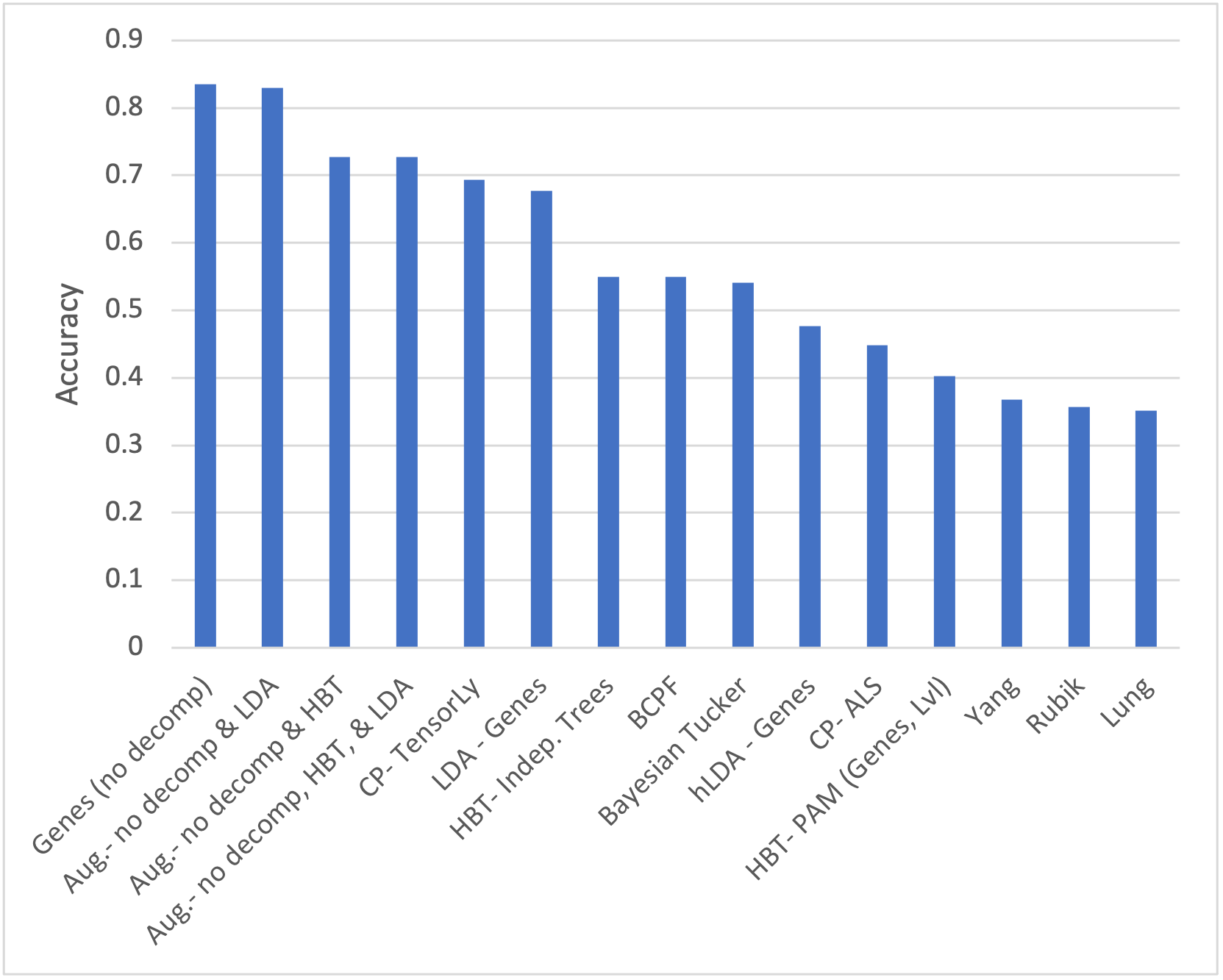}
    \caption{Genes data (without decomposition) is more accurate than decomposition and augmented models on cancer data (using the mean over ten cross-validation folds).}
    \label{Cancer Accuracy}
\end{figure}

We trained logistic regression models using patient groups from the decomposition models to predict each patient's cancer diagnosis. Figure \ref{Cancer Accuracy} shows the accuracy of these models, and Appendix \ref{Classification Models} details our implementation. Unfortunately, none of the models we trained, including the baselines, outperformed classifying using the genetic variant counts (i.e., not using a decomposition model), which had a mean validation accuracy of 83.49\%. Augmenting the genetic variant data with our decomposition models did not improve the accuracy of our predictions. Of the decomposition models, CP TensorLy performed best (69.33\% accuracy). The other CP models, \citepos{ZhaoZC14} Bayesian CP Factorization (BCPF) (54.94\% accuracy) and \citepos{bader_kolda_2021} alternating least squares (ALS) CP (44.78\% accuracy) performed significantly worse. The second-best decomposition model was LDA on the genetic variants (67.74\% accuracy). On the other hand, our genes-based hLDA model had a 47.68\% accuracy. Our independent trees HBT model and our non-hierarchical Bayesian Tucker model performed about the same, with accuracies of 54.98\% and 54.14\%, respectively. Our PAM-based HBT model, using genes as the dominant mode and the level method, performed worse (40.31\% accuracy). \citepos{yang_dunson_2016} Bayesian Conditional Tensor factorization (36.74\% accuracy) and \citepos{Wang2015} Rubik model (35.70\% accuracy) only slightly outperformed diagnosing all patients with Lung cancer, the most prevalent class, at 35.10\% prevalency.

\subsection{Classification Models}
\label{Classification Models}

For the CP TensorLy model, we removed genes that appeared in fewer than 200 or more than 2,000 patients and used 200 topics. For the LDA model, we modified \citepos{NIPS2010_3902} variational Bayes implementation using 40 topics. For our independent trees HBT model, we removed genes that appeared in fewer than 400 or more than 1,000 patients and used three levels, $\gamma=0.1$, and our "keep best" methodology. For our Bayesian Tucker model, we removed genes that appeared in fewer than 200 or more than 2,000 patients and used 10 topics on each mode and our "keep best" methodology. For our hLDA model, we used two levels and $\gamma=0.1$. For our CP ALS model, we used 25 topics. For our PAM-based HBT, we removed genes that appeared in fewer than 400 or more than 1,000 patients and used three levels and 10 topics per level. For the Rubik model, we used 5 topics.

\subsection{Likelihood}
\label{Likelihood}

One issue with comparing likelihoods across models is that the probabilities from Section \ref{Models} are not comparable due to differing hierarchical model structures. To compute the held-out likelihood, we would need to sum over or integrate out our hierarchical model variables, which do not have a closed-form solution.

To solve this problem, we use a non-parametric likelihood estimate, similar to \cite{Li:2006:PAD:1143844.1143917} and based on empirical likelihood \citep{Diggle}. First, we randomly generate one thousand patients using the trained generative process. Then, we compute the probabilities of a held-out test or validation patients as a mixture of the generated patients. Unlike other likelihood measures, this method is stable, easy to compute, and yields values that are comparable across models.

We trained HBT decomposition models using various hierarchical models and computed the mean validation log-likelihood (over the ten CV folds, using the above methodology). Each hierarchical model was trained with varying levels $L\in\{2,3,4,5\}$. Additionally, we trained the independent trees model with three CRP hyperparameters $\gamma\in\{0.5,1,2\}$. For the PAM-based model, we compared each choice of dominant mode (genes or pathways) and topic set composition (Cartesian or level set method, see Section \ref{Conditional HBTD} for definitions). We also used varying topics per level $\tau\in\{10,25,50\}$.

\begin{figure}[t]
    \centering
    \includegraphics[scale=0.125]{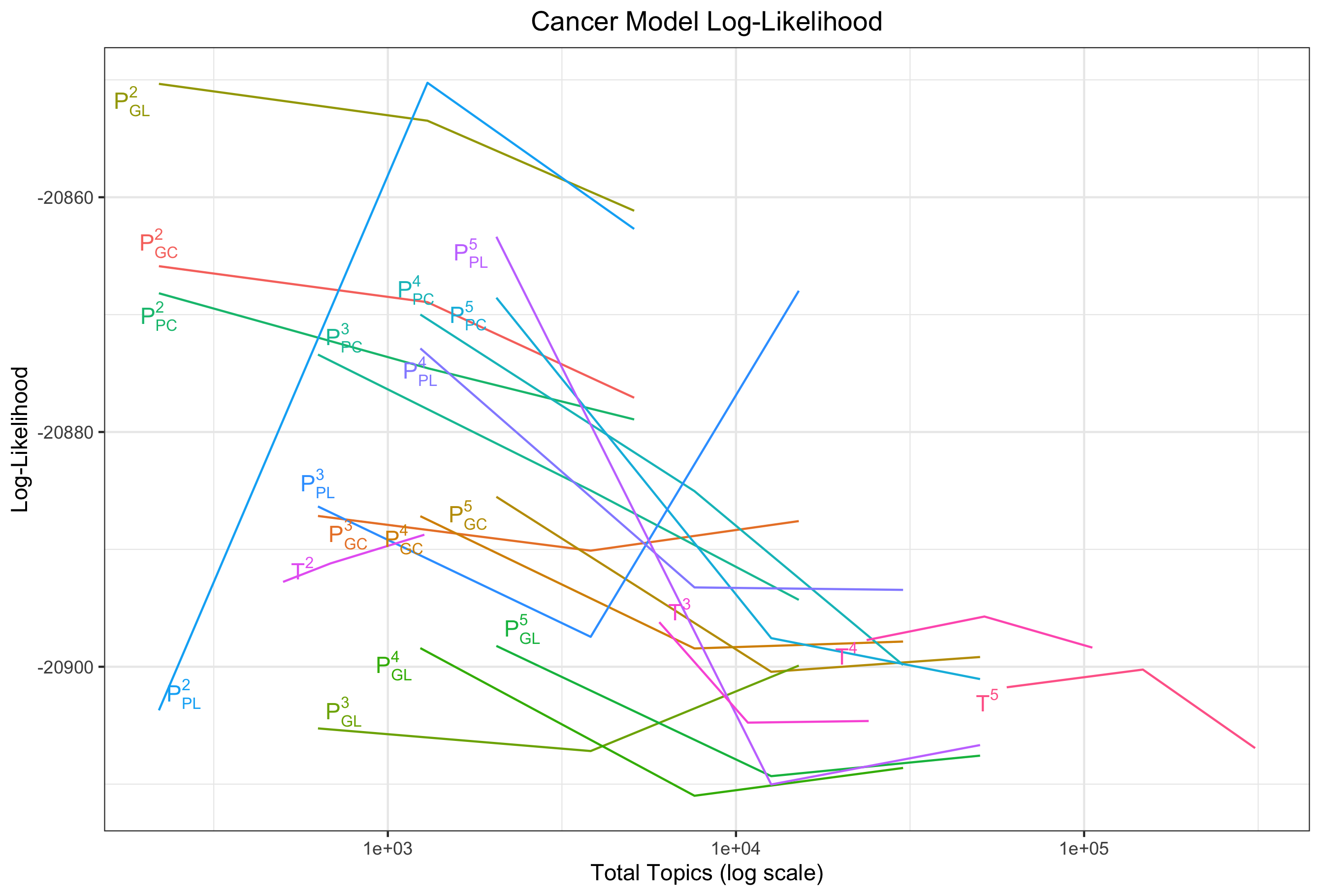}
    \caption{Each line has a coded label: The large letter indicates if the point is from the independent trees model ("T") or the PAM-based model ("P"). The superscript indicates the number of levels in the model. The subscript (for the PAM-based models) indicates the dominant mode, genes ("G") or pathways ("P"), followed by the topic set composition method, Cartesian ("C") or level set ("L"). Every label (combination of model type, number of levels, dominant mode, and topic set composition, if applicable) has the same color.}
    \label{Cancer LL Plot}
\end{figure}

We plotted the results in Figure \ref{Cancer LL Plot} and gave the values in Table \ref{Cancer LL Table}. This figure depicts the mean validation log-likelihood (over the ten CV folds) for hierarchical models trained on the cancer data set, comparing it to the total number of topics (product of the number of gene and pathway topics) for the model. In the case of the independent trees model, the number of topics is an average over the CV folds. Though the CRP hyperparameter ($\gamma$) and number of topics per level ($\tau$) are not displayed on the plot, they dictate the number of total topics.

Figure \ref{Cancer LL Plot} shows that the log-likelihood for models with fewer topics is often larger than those with more topics (14 of 20 lines peak at the fewest topics), indicating that some models are overfitted. We observed that some PAM-based models outperformed the independent trees models, which we hoped would be the case given the inherent hierarchical structure between genes and pathways. Still, there was no clear reason for which models were better than others (other than the total number of topics). The PAM-based models with level-set topic composition are the most sensitive to changes in the other parameters (the standard deviation of the log-likelihoods of all such models with genes as the dominant mode was 23.15 and 19.86 for pathway-dominant models). The independent tree models were the least sensitive (with a standard deviation of 5.68). The PAM-based models with Cartesian topic composition were in between (with a standard deviation of 11.52 for gene-dominant models and 12.56 for pathway-dominant models). However, the differences between these models are well within the margin of error. The standard deviation in log-likelihood over the CV folds for each model is about 2,090 (or 10\% the log-likelihood).

Table \ref{Cancer LL Table} gives the hierarchical topic model (independent trees or PAM-based), dominant mode (for PAM-based model, genes or pathways), whether the topic sets are created using the Cartesian or level method (See Section \ref{Conditional HBTD} for definitions), the value of $\gamma$ used (hyperparameter in CRP for the independent trees model, a uniform Dirichlet prior was used in PAM), the topics per level $\tau$ (for the PAM-based model), the number of hierarchical levels, the mean validation log-likelihood (over the 10-fold CV, computed using the method described in Section \ref{Likelihood}), the standard deviation of the log-likelihood (over the 10-fold CV), the number of gene and pathway topics (or mean number of topics across the CV folds in the case of the independent trees model), and the total number of topics (the product of the number of gene and pathway topics). This data is presented in Section \ref{Likelihood}.

\begin{table*}[t]
\caption{Cancer Log-Likelihood}
\label{Cancer LL Table}
\tiny
\centering
\begin{tabular}{l|c|c|c|c|c|c|c|c|c}
Topic & Dominant & & & & & & Gene & Pathway & Total   \\
Model  & Mode & Topic Set & $\gamma$/$\tau$ & Levels & Mean  & StDev & Topics & Topics & Topics  \\\hline
Trees &               & Cart. & 0.5  & 2      & -20893 & 2090 & 22        & 22           & 500        \\
Trees &               & Cart. & 1 & 2      & -20891 & 2092 & 27        & 26           & 684        \\
Trees &               & Cart. & 2     & 2      & -20889 & 2094 & 35        & 37           & 1273      \\
Trees &               & Cart. & 0.5   & 3      & -20896 & 2088 & 78        & 78           & 6021      \\
Trees &               & Cart. & 1     & 3      & -20905 & 2093 & 105       & 103          & 10820     \\
Trees &               & Cart. & 2     & 3      & -20905 & 2093 & 159       & 151          & 24055     \\
Trees &               & Cart. & 0.5   & 4      & -20898 & 2090 & 157       & 151          & 23729     \\
Trees &               & Cart. & 1     & 4      & -20896 & 2095 & 226       & 229          & 51709     \\
Trees &               & Cart. & 2     & 4      & -20898 & 2089 & 323       & 327          & 105599    \\
Trees &               & Cart. & 0.5   & 5      & -20902 & 2093 & 257       & 234          & 59938     \\
Trees &               & Cart. & 1     & 5      & -20900 & 2093 & 377       & 391          & 147520    \\
Trees &               & Cart. & 2     & 5      & -20907 & 2091 & 551       & 563          & 309852    \\
PAM               & Genes         & Cart. & 10               & 2      & -20866 & 2086 & 11          & 20             & 220          \\
PAM               & Genes         & Cart. & 25               & 2      & -20869 & 2093 & 26          & 50             & 1300         \\
PAM               & Genes         & Cart. & 50               & 2      & -20877 & 2089 & 51          & 100            & 5100         \\
PAM               & Genes         & Cart. & 10               & 3      & -20887 & 2094 & 21          & 30             & 630          \\
PAM               & Genes         & Cart. & 25               & 3      & -20890 & 2091 & 51          & 75             & 3825         \\
PAM               & Genes         & Cart. & 50               & 3      & -20888 & 2091 & 101         & 150            & 15150        \\
PAM               & Genes         & Cart. & 10               & 4      & -20887 & 2093 & 31          & 40             & 1240         \\
PAM               & Genes         & Cart. & 25               & 4      & -20898 & 2094 & 76          & 100            & 7600         \\
PAM               & Genes         & Cart. & 50               & 4      & -20898 & 2092 & 151         & 200            & 30200        \\
PAM               & Genes         & Cart. & 10               & 5      & -20886 & 2085 & 41          & 50             & 2050         \\
PAM               & Genes         & Cart. & 25               & 5      & -20900 & 2090 & 101         & 125            & 12625        \\
PAM               & Genes         & Cart. & 50               & 5      & -20899 & 2086 & 201         & 250            & 50250        \\
PAM               & Genes         & Level     & 10               & 2      & -20850 & 2090 & 11          & 20             & 220          \\
PAM               & Genes         & Level     & 25               & 2      & -20853 & 2100 & 26          & 50             & 1300         \\
PAM               & Genes         & Level     & 50               & 2      & -20861 & 2087 & 51          & 100            & 5100         \\
PAM               & Genes         & Level     & 10               & 3      & -20905 & 2092 & 21          & 30             & 630          \\
PAM               & Genes         & Level     & 25               & 3      & -20907 & 2092 & 51          & 75             & 3825         \\
PAM               & Genes         & Level     & 50               & 3      & -20900 & 2093 & 101         & 150            & 15150        \\
PAM               & Genes         & Level     & 10               & 4      & -20898 & 2098 & 31          & 40             & 1240         \\
PAM               & Genes         & Level     & 25               & 4      & -20911 & 2094 & 76          & 100            & 7600         \\
PAM               & Genes         & Level     & 50               & 4      & -20909 & 2087 & 151         & 200            & 30200        \\
PAM               & Genes         & Level     & 10               & 5      & -20898 & 2084 & 41          & 50             & 2050         \\
PAM               & Genes         & Level     & 25               & 5      & -20909 & 2101 & 101         & 125            & 12625        \\
PAM               & Genes         & Level     & 50               & 5      & -20908 & 2093 & 201         & 250            & 50250        \\
PAM               & Path.      & Cart. & 10               & 2      & -20868 & 2090 & 20          & 11             & 220          \\
PAM               & Path.      & Cart. & 25               & 2      & -20875 & 2092 & 50          & 26             & 1300         \\
PAM               & Path.      & Cart. & 50               & 2      & -20879 & 2089 & 100         & 51             & 5100         \\
PAM               & Path.      & Cart. & 10               & 3      & -20873 & 2093 & 30          & 21             & 630          \\
PAM               & Path.      & Cart. & 25               & 3      & -20885 & 2092 & 75          & 51             & 3825         \\
PAM               & Path.      & Cart. & 50               & 3      & -20894 & 2093 & 150         & 101            & 15150        \\
PAM               & Path.      & Cart. & 10               & 4      & -20870 & 2091 & 40          & 31             & 1240         \\
PAM               & Path.      & Cart. & 25               & 4      & -20885 & 2092 & 100         & 76             & 7600         \\
PAM               & Path.      & Cart. & 50               & 4      & -20900 & 2095 & 200         & 151            & 30200        \\
PAM               & Path.      & Cart. & 10               & 5      & -20869 & 2088 & 50          & 41             & 2050         \\
PAM               & Path.      & Cart. & 25               & 5      & -20898 & 2090 & 125         & 101            & 12625        \\
PAM               & Path.      & Cart. & 50               & 5      & -20901 & 2092 & 250         & 201            & 50250        \\
PAM               & Path.      & Level     & 10               & 2      & -20904 & 2092 & 20          & 11             & 220          \\
PAM               & Path.      & Level     & 25               & 2      & -20850 & 2089 & 50          & 26             & 1300         \\
PAM               & Path.      & Level     & 50               & 2      & -20863 & 2093 & 100         & 51             & 5100         \\
PAM               & Path.      & Level     & 10               & 3      & -20886 & 2090 & 30          & 21             & 630          \\
PAM               & Path.      & Level     & 25               & 3      & -20897 & 2094 & 75          & 51             & 3825         \\
PAM               & Path.      & Level     & 50               & 3      & -20868 & 2094 & 150         & 101            & 15150        \\
PAM               & Path.      & Level     & 10               & 4      & -20873 & 2098 & 40          & 31             & 1240         \\
PAM               & Path.      & Level     & 25               & 4      & -20893 & 2090 & 100         & 76             & 7600         \\
PAM               & Path.      & Level     & 50               & 4      & -20893 & 2089 & 200         & 151            & 30200        \\
PAM               & Path.      & Level     & 10               & 5      & -20863 & 2096 & 50          & 41             & 2050         \\
PAM               & Path.      & Level     & 25               & 5      & -20910 & 2092 & 125         & 101            & 12625        \\
PAM               & Path.      & Level     & 50               & 5      & -20907 & 2089 & 250         & 201            & 50250       
\end{tabular}
\end{table*}

\end{document}